\newtheorem{theorem}{Theorem}
\newtheorem{definition}{Definition}
\newtheorem{lemma}[theorem]{Lemma}
\DeclareMathOperator*{\argmax}{argmax}
\DeclareMathOperator*{\sign}{sign}
\newcommand{\real}{\mathbb{R}}
\newcommand{\cmp}{\mathbb{C}}
\newcommand{\EE}{\mathbb{E}}
\newcommand{\atomsgd}{\textsc{Atomo}}
\newcommand{\eg}{\textit{e.g.}}
\newcommand{\ie}{\textit{i.e.}}
\newcommand\Tstrut{\rule{0pt}{1.5\normalbaselineskip}}
\newcommand\Bstrut{\rule[-1.0\normalbaselineskip]{0pt}{0pt}} 
\definecolor{myblue}{rgb}{.8, .8, 1}
\newlength\mytemplen
\newsavebox\mytempbox
\newcommand\mybluebox{%
	\@ifnextchar[
	{\@mybluebox}%
	{\@mybluebox[0pt]}}
\def\@mybluebox[#1]{%
	\@ifnextchar[
	{\@@mybluebox[#1]}%
	{\@@mybluebox[#1][0pt]}}
\def\@@mybluebox[#1][#2]#3{
	\sbox\mytempbox{#3}%
	\mytemplen\ht\mytempbox
	\advance\mytemplen #1\relax
	\ht\mytempbox\mytemplen
	\mytemplen\dp\mytempbox
	\advance\mytemplen #2\relax
	\dp\mytempbox\mytemplen
	\colorbox{myblue}{\hspace{1em}\usebox{\mytempbox}\hspace{1em}}}
\newtcbox{\mymath}[1][]{%
	nobeforeafter, math upper, tcbox raise base,
	enhanced, colframe=blue!30!black,
	colback=blue!30, boxrule=1pt,
	#1}
\title{\atomsgd: Communication-efficient Learning via Atomic Sparsification}
\newcommand*\samethanks[1][\value{footnote}]{\footnotemark[#1]}
\author[1]{{Hongyi Wang\thanks{These authors contributed equally.}}}
\author[2]{{Scott Sievert\samethanks}}
\author[2]{{Zachary Charles}}
\author[1]{{Shengchao Liu}}
\author[1]{{Stephen Wright}}
\author[2]{{Dimitris Papailiopoulos}}
\affil[1]{Department of Computer Sciences, UW--Madison}
\affil[2]{Department of Electrical and Computer Engineering, UW--Madison}
\begin{document}

\maketitle

\begin{abstract}
Distributed model training suffers from communication overheads due to frequent gradient updates transmitted between compute nodes.
To mitigate these overheads, several studies propose the use of sparsified stochastic gradients.
We argue that these are facets of a general sparsification method that can operate on any possible {\it atomic decomposition}. Notable examples include element-wise, singular value, and Fourier decompositions.
We present \atomsgd{}, a general framework for atomic sparsification of stochastic gradients.
Given a gradient, an atomic decomposition, and a sparsity budget, \atomsgd{} gives a random unbiased sparsification of the atoms minimizing variance.
We show that recent methods such as QSGD and TernGrad are special cases of \atomsgd{} 
and that sparsifiying the singular value decomposition of neural networks gradients, rather than their coordinates, can lead to significantly faster distributed training.
\end{abstract}


\section{Introduction}\label{sec:intro}

	Distributed computing systems have become vital to the success of modern machine learning systems. Work in parallel and distributed optimization has shown that these systems can obtain massive speed up gains in both convex and non-convex settings \cite{chen2016revisiting,dean2012large,duchi2015asynchronous,jaggi2014communication,liu2015asynchronous, recht2011hogwild}. Several machine learning frameworks such as TensorFlow \cite{abadi2016tensorflow}, MXNet \cite{chen2015mxnet}, and Caffe2 \cite{caffe2}, come with distributed implementations of popular training algorithms, such as mini-batch SGD. However, the empirical speed-up gains offered by distributed training, often fall short of the optimal linear scaling one would hope for. It is now widely acknowledged that communication overheads are the main source of this speedup saturation phenomenon~\cite{dean2012large, seide20141, strom2015scalable, qi17paleo,grubic2018synchronous}.
	 
	Communication bottlenecks are largely attributed to frequent gradient updates transmitted between compute nodes.
	 As the number of parameters in state-of-the-art models scales to hundreds of millions \cite{he2016deep, huang2017densely}, the size of  gradients scales proportionally.
	 These bottlenecks become even more pronounced in the context of federated learning \cite{mcmahan2016communication,konevcny2016federated}, where edge devices (\eg, mobile phones, sensors, etc) perform decentralized training, but suffer from low-bandwidth during up-link.

	To reduce the cost of of communication during distributed model training, a series of recent studies propose communicating low-precision or sparsified versions of the computed gradients during model updates. Partially initiated by a 1-bit implementation of SGD by Microsoft in \cite{seide20141}, a large number of recent studies revisited the idea of low-precision training as a means to reduce communication~\cite{
	 	de2015taming,
	 	alistarh2017qsgd,
	 	zhou2016dorefa,
	 	wen2017terngrad,
	 	de2017understanding,
	 	zhang2017zipml,
	 	rastegari2016xnor,
	 	de2017understanding,
	 	de2018high,
	 	bernstein2018signsgd}. 
	 	Other approaches for low-communication training focus on sparsification of gradients, either by thresholding small entries or by random sampling~\cite{
	 	strom2015scalable,
	 	mania2015perturbed,
	 	leblond2016asaga,
	 	aji2017sparse,
	 	lin2017deep,
	 	chen2017adacomp,
	 	renggli2018sparcml,
	 	tsuzuku2018variance}. 
	 Several approaches, including QSGD and TernGrad, implicitly combine quantization and sparsification to maximize performance gains~\cite{
	 	alistarh2017qsgd,
	 	wen2017terngrad,
	 	konevcny2016federated,
	 	konevcny2016randomized,
	 	suresh2016distributed}, while providing provable guarantees for convergence and performance.
	 We note that quantization methods in the context of gradient based updates have a rich history, dating back to at least as early as the 1970s \cite{gitlin1973design, alexander1987transient,bermudez1996nonlinear}.

	\paragraph{Our Contributions}
	An atomic decomposition represents a vector as a linear combination of simple building blocks in an inner product space. In this work, we show that stochastic gradient sparsification and quantization are facets of a general approach that sparsifies a gradient in any possible atomic decomposition, including its entry-wise or singular value decomposition, its Fourier decomposition, and more. 
	With this in mind, we develop \atomsgd{}, a general framework for atomic sparsification of stochastic gradients. \atomsgd{} sets up and optimally solves a meta-optimization that minimizes the variance of the sparsified gradient, subject to the constraints that it is sparse on the atomic basis, and also is an unbiased estimator of the input. 
	
	We show that 1-bit QSGD and TernGrad are in fact  special cases of \atomsgd{}, and each is  optimal (in terms of variance and sparsity), in different parameter regimes. 
	Then, we argue that for some neural network applications, viewing the gradient as a concatenation of matrices (each corresponding to a layer), and applying atomic sparsification to their SVD is meaningful and well-motivated by the fact that these matrices are ``nearly'' low-rank, \eg, see Fig.~\ref{fig:SVdecay}. We show that \atomsgd{} on the SVD of each layer's gradient, can  lead to less variance, and faster training, for the same communication budget as that of QSGD or TernGrad. We  present extensive experiments showing that using \atomsgd{} with SVD sparsification, can lead to up to $2\times$ faster  training time (including the time to compute the SVD) compared to QSGD, on VGG and ResNet-18, and SVHN and CIFAR-10.

\begin{figure}[h]
	\centering
	\includegraphics[width=0.5\textwidth]{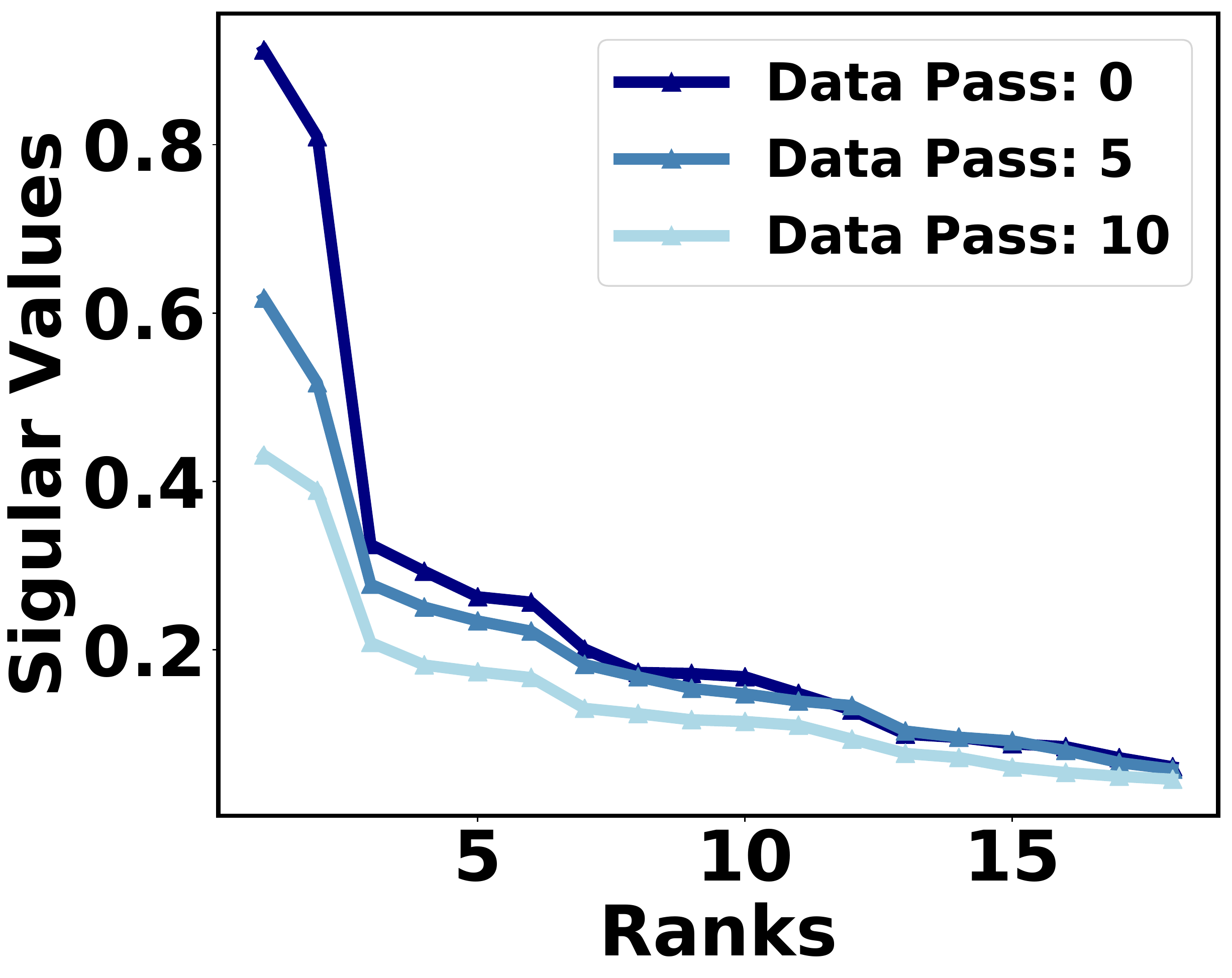}
	\caption{\small The singular values of a convolutional layer's gradient, for ResNet-18 while training on CIFAR-10. The gradient of a layer can be seen as a matrix, once we vectorize and appropriately stack the convolutional filters. For all presented data passes, there is a sharp decay in singular values, with the top 3 standing out.}\label{fig:SVdecay}
\end{figure}

	\paragraph{Relation to Prior Work}

	\atomsgd{} is  closely related to work on communication-efficient distributed mean estimation in \cite{konevcny2016randomized} and \cite{suresh2016distributed}. These works both note, as we do, that  variance (or equivalently the mean squared error) controls important quantities such as convergence, and they seek to find a low-communication vector averaging scheme that minimizes it. Our work differs in two key aspects. First, we derive a closed-form solution to the variance minimization problem for all input gradients. Second, \atomsgd{} applies to any atomic decomposition, which allows us to compare entry-wise against singular value sparsification for matrices. Using this, we derive explicit conditions for which SVD sparsification leads to lower variance for the same sparsity budget.
	
	The idea of viewing gradient sparsification through a meta-optimization lens was also used in \cite{wangni2017gradient}. Our work differs in two key ways. First, \cite{wangni2017gradient} consider the problem of minimizing the sparsity of a gradient for a fixed variance, while we consider the reverse problem, that is, minimizing the variance subject to a sparsity budget. The second more important difference is that while \cite{wangni2017gradient} focuses on entry-wise sparsification, we consider a general problem where we sparsify according to any atomic decomposition. For instance, our approach directly applies to sparsifying the singular values of a matrix, which gives rise to faster training algorithms.
	
	Finally, low-rank factorizations and sketches of the gradients when viewed as matrices were proposed in
	\cite{
		xue2013restructuring,
		sainath2013low,
		jaderberg2014speeding,
		wiesler2014mean,
		konevcny2016federated}; arguably most of these methods (with the exception of \cite{konevcny2016federated}) aimed to address the high flops required during inference by using low-rank models. Though they did not directly aim to reduce communication, this arises as a useful side effect.


\section{Problem Setup}\label{sec:prelim}

    	In machine learning, we often wish to find a model $w$ minimizing the {\it empirical risk} 
    	\begin{equation}\label{eq:emp_risk}
    	f(w) = \frac{1}{n}\sum_{i=1}^n \ell(w;x_i)\end{equation}
    	where $x_i \in \real^d$ is the $i$-th data point. One way to approximately minimize $f(w)$ is by using stochastic gradient methods that operate as follows:
    	$$w_{k+1} = w_{k}-\gamma \widehat{g}(w_{k})$$
    	where $w_0$ is some initial model, $\gamma$ is the step size, and $\widehat{g}(w)$ is a stochastic gradient of $f(w)$, \ie it is an unbiased estimate of the true gradient $g(w) = \nabla f(w)$. 
    	Mini-batch SGD, one of the most common algorithms for distributed training, computes $\widehat{g}$ as an average of $B$ gradients, each evaluated on randomly sampled data  from the training set. Mini-batch SGD is easily parallelized in the parameter server (PS) setup, where a PS stores the global model, and $P$ compute nodes split the effort of computing the $B$ gradients. Once the PS receives these gradients, it applies them to the model, and sends it back to the compute nodes.
    	
    	To prove convergence bounds for stochastic-gradient based methods, we usually require $\widehat{g}(w)$ to be an unbiased estimator of the full-batch gradient, and to have small second moment $\mathbb{E}\|\widehat{g}(w)\|^2$, as this  controls the speed of convergence.
		To see this, suppose $w^*$ is a critical point of $f$, then we have		
    	\begin{align*}
		\EE[\|w_{k+1}-w^*\|_2^2]
		&= \EE[\|w_{k}-w^*\|_2^2] -\underbrace{\left(2\gamma \langle \nabla f(w_k), w_{k}-w_*\rangle - \gamma^2\mathbb{E}[\|\widehat{g}(w_k)\|^2_2]\right)}_{\text{progress at step $t$}}.
		\end{align*}

		Thus, the progress of the algorithm at a single step is, in expectation, controlled by the term $\EE[\|\widehat{g}(w_k)\|]_2^2$; the smaller it is, the bigger the progress. This is a well-known fact in optimization, and most convergence bounds for stochastic-gradient based methods, including mini-batch, involve upper bounds on $\EE[\|\widehat{g}(w_k)\|]_2^2$ in convex and nonconvex settings
		~\cite{cotter2011better,
			ghadimi2013stochastic,
			recht2011hogwild,
			bubeck2015convex,
			bubeck2015convex,
			de2015global,
			reddi2016stochastic,
			karimi2016linear,
			de2016big,
			yin2018gradient}. 
		In short, recent results on low-communication variants of SGD design unbiased quantized or sparse gradients, and try to minimize their variance~\cite{alistarh2017qsgd,konevcny2016randomized,wangni2017gradient}. Note that when we require the estimate to be unbiased, minimizing the variance is equivalent to minimizing the second moment.
		
		Since variance is a proxy for speed of convergence, in the context of communication-efficient stochastic gradient methods, one can ask: {\it What is the smallest possible variance of an unbiased stochastic gradient that can be represented with $k$ bits?} Note that under the unbiased assumption, minimizing variance is equivalent to minimizing the second moment of the random vector. This meta-optimization can be cast as the following meta-optimization:
			\begin{align}
				&\min_{g} \mathbb{E}\|\widehat{g}(w)\|^2 \nonumber\\
				\text{s.t. }& \mathbb{E}[\widehat{g}(w)] =  g(w)\nonumber \\
				& \widehat{g}(w) \text{ can be expressed with $k$ bits}\nonumber				
			\end{align}			

		Here, the expectation is taken over the randomness of $\widehat{g}$. We are interested in designing a stochastic approximation $\widehat{g}$ that ``solves'' this optimization. However, it seems difficult to design a formal, tractable version of the last constraint. In the next section, we replace this with a simpler constraint that instead requires that $\widehat{g}(w)$ is sparse with respect to a given atomic decomposition.


\section{\atomsgd{}: Atomic Decomposition and Sparsification}\label{sec:theory}

	Let $(V,\langle \cdot, \cdot \rangle)$ be an inner product space over $\real$ and let $\|\cdot\|$ denote the induced norm on $V$. In what follows, you may think of $g$ as a stochastic gradient of the function we wish to optimize. An {\it atomic decomposition} of $g$ is any decomposition of the form $g = \sum_{a \in \mathcal{A}} \lambda_aa$ for some set of atoms $\mathcal{A} \subseteq V$. Intuitively, $\mathcal{A}$ consists of simple building blocks. We will assume that for all $a \in \mathcal{A}$, $\|a\| = 1$, as this can be achieved by a positive rescaling of the $\lambda_a$.
	
	An example of an atomic decomposition is the entry-wise decomposition $g = \sum_i g_ie_i$ where $\{e_i\}_{i=1}^n$ is the standard basis. More generally, any orthonormal basis of $V$ gives rise to a unique atomic decomposition of any $g \in V$. While we focus on finite-dimensional vectors, one could use Fourier and wavelet decompositions in this framework for infinite-dimensional spaces. When considering matrices, the singular value decomposition gives an atomic decomposition in the set of rank-1 matrices. More general atomic decompositions have found uses in a variety of situations, including solving linear inverse problems \cite{chandrasekaran2012convex}.

	We are interested in finding an approximation to $g$ with fewer atoms. Our primary motivation is that this reduces communication costs, as we only need to send atoms with non-zero weights. We can use whichever decomposition is most amenable for sparsification. For instance, if $X$ is a low rank matrix, then its singular value decomposition is naturally sparse, so we can save communication costs by sparsifying its singular value decomposition instead of its entries.

	Suppose $\mathcal{A} = \{a_i\}_{i=1}^n$ and we have an atomic decomposition $g = \sum_{i=1}^n \lambda_ia_i$. We wish to find an unbiased estimator $\widehat{g}$ of $g$ that is sparse in these atoms, and with small variance. Since $\widehat{g}$ is unbiased, minimizing its variance is equivalent to minimizing $\EE[\|\widehat{g}\|^2]$. We use the following estimator:
	\begin{equation}\label{eq:vhat}
	\widehat{g} = \sum_{i=1}^n \dfrac{\lambda_it_i}{p_i}a_i\end{equation}
	where $t_i \sim \text{Bernoulli}(p_i)$, for $0 < p_i \leq 1$. We refer to this sparsification scheme as {\it atomic sparsification}. Note that the $t_i$'s are independent. Recall that we assumed above that $\|a_i\|^2 = 1$ for all $a_i$. We have the following lemma about $\widehat{g}$.

	\begin{lemma}\label{lem:basic}If $g = \sum_{i=1}^n \lambda_ia_i$ is an atomic decomposition then $\EE[\widehat{g}] = g$ and
	$$\EE[\|\widehat{g}\|^2] = \sum_{i=1}^n \dfrac{\lambda_i^2}{p_i} + \sum_{i \neq j} \lambda_i\lambda_j\langle a_i,a_j\rangle.$$\end{lemma}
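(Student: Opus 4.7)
The plan is a direct computation that splits into two parts: unbiasedness, and the second-moment identity. Both follow from linearity of expectation, bilinearity of the inner product, and two elementary facts about Bernoulli variables: for $t_i\sim\text{Bernoulli}(p_i)$ we have $\EE[t_i]=p_i$ and, because $t_i\in\{0,1\}$, $\EE[t_i^2]=\EE[t_i]=p_i$; and for independent Bernoullis $\EE[t_it_j]=p_ip_j$ when $i\neq j$.

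First I would verify unbiasedness. By linearity,
\[
\EE[\widehat{g}] \;=\; \sum_{i=1}^n \frac{\lambda_i\,\EE[t_i]}{p_i}\,a_i \;=\; \sum_{i=1}^n \lambda_i a_i \;=\; g,
\]
using that $p_i>0$ so the division is well-defined. This is the easy half.

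Next I would expand the squared norm via the inner product and apply bilinearity:
\[
\|\widehat{g}\|^2 \;=\; \Bigl\langle \sum_{i} \tfrac{\lambda_i t_i}{p_i} a_i,\ \sum_{j} \tfrac{\lambda_j t_j}{p_j} a_j \Bigr\rangle
\;=\; \sum_{i,j} \frac{\lambda_i\lambda_j\, t_i t_j}{p_i p_j}\,\langle a_i, a_j\rangle.
\]
Taking expectation and splitting the double sum into the diagonal terms $i=j$ and off-diagonal terms $i\neq j$, I would substitute the two Bernoulli moment identities above. The diagonal contribution becomes
\[
\sum_{i=1}^n \frac{\lambda_i^2}{p_i^2}\cdot p_i \cdot \|a_i\|^2 \;=\; \sum_{i=1}^n \frac{\lambda_i^2}{p_i},
\]
where I use the normalization $\|a_i\|=1$, while the off-diagonal contribution collapses to $\sum_{i\neq j}\lambda_i\lambda_j\,\langle a_i,a_j\rangle$ because the $p_ip_j$ factors cancel.

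There is no real obstacle; the only point worth flagging is the cancellation on the off-diagonal, which is the whole reason the inverse-probability weighting is chosen this way and is also what makes this inner sum independent of the $p_i$'s. That observation is exactly what makes the subsequent variance-minimization over $\{p_i\}$ tractable in the next section, since only the diagonal term depends on the probabilities.
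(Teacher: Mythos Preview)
Your proof is correct; the paper does not include a proof of this lemma, treating it as an immediate computation, and your direct expansion via linearity of expectation, bilinearity of the inner product, and the Bernoulli moment identities $\EE[t_i^2]=p_i$ and $\EE[t_it_j]=p_ip_j$ for $i\neq j$ is exactly the intended argument. Your remark that only the diagonal term depends on the $p_i$'s is also the right observation for what follows.
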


	Let $\lambda = [\lambda_1,\ldots, \lambda_n]^T,\; p = [p_1,\ldots,p_n]^T$. In order to ensure that this estimator is sparse, we fix some {\it sparsity budget} $s$. That is, we require $\sum_i p_i = s$; note that this a {\it sparsity on average} constraint. We wish to minimize $\EE[\|\widehat{g}\|^2]$ subject to this constraint. By Lemma \ref{lem:basic}, this is equivalent to 
	\begin{equation}\label{eq:opt_prob1}
		\min_{p}~\sum_{i=1}^n \dfrac{\lambda_i^2}{p_i}\quad \text{subject to}~~0 < p_i \leq 1,~~\sum_{i=1}^n p_i = s.
	\end{equation}

	An equivalent form of this optimization problem was previously presented in \cite{konevcny2016randomized} (Section 6.1). The authors considered this problem for entry-wise sparsification and found a closed-form solution for $s \leq \|\lambda\|_1/\|\lambda\|_\infty$. We give a version of their result but extend this to a closed-form solution for all $s$. A similar optimization problem was given in \cite{wangni2017gradient}, which instead minimizes sparsity subject to a variance constraint.

	\vspace{0.5cm}

	\begin{algorithm}[H]
		\SetKwInOut{Input}{Input}
		\SetKwInOut{Output}{Output}
		\Input{$\lambda \in \real^n$ with $|\lambda_1| \geq \ldots |\lambda_n| $; sparsity budget $s$ such that $0 < s \leq n$.}
		\Output{$p \in \real^n$ solving \eqref{eq:opt_prob1}.}
		$i = 1$\;
		\While{$i \leq n$}{
			\eIf{ $|\lambda_i|s \leq \sum_{j=i}^n |\lambda_i|$}{
				\For{$k = i,\ldots, n$}{
				$p_k = |\lambda_k|s\left(\sum_{j=i}^n |\lambda_i|\right)^{-1}$\;
				}
				$i = n+1$\;				
			}{
				$p_i = 1, s = s-1$\;
				$i = i+1$\;
			}
		}
		\caption{\atomsgd{} probabilities}
		\label{alg:opt_sparse}
	\end{algorithm}

	\vspace{0.5cm}

	We will show that the Algorithm~\ref{alg:opt_sparse} produces a probability vector $p \in \real^n$ solving \eqref{eq:opt_prob1} for $0 < s \leq n$. While we show in Appendix \ref{sec:appendix_kkt} that this result can be derived using the KKT conditions, we use an alternative method that focuses on a relaxation of \eqref{eq:opt_prob1} in order to better understand the structure of the problem. This approach has the added benefit of shedding light on what variance is achieved by solving \eqref{eq:opt_prob1}.

	Note that \eqref{eq:opt_prob1} has a non-empty feasible set only for $0 < s \leq n$. Define $f(p) := \sum_{i=1}^n\lambda_i^2/p_i$. To understand how to solve \eqref{eq:opt_prob1}, we first consider the following relaxation:
	\begin{equation}\label{eq:opt_prob2}
		\min_{p}~\sum_{i=1}^n \dfrac{\lambda_i^2}{p_i}\quad \text{subject to}~~ 0 < p_i,~~\sum_{i=1}^n p_i = s.
	\end{equation}

	We have the following lemma about the solutions to \eqref{eq:opt_prob2}, first shown in \cite{konevcny2016randomized}.
	\begin{lemma}[\cite{konevcny2016randomized}]\label{thm:opt_prob2}
		Any feasible vector $p$ to \eqref{eq:opt_prob2} satisfies $f(p) \geq \dfrac{1}{s}\|\lambda\|_1^2$.
		This is achieved iff
		\begin{equation}\label{eq:opt_p}
		p_i = \dfrac{|\lambda_i|s}{\|\lambda\|_1}.\end{equation}
	\end{lemma}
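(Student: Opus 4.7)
The plan is to use the Cauchy--Schwarz inequality directly, which yields both the lower bound and the equality condition in one shot. Write
$$\|\lambda\|_1 \;=\; \sum_{i=1}^n |\lambda_i| \;=\; \sum_{i=1}^n \frac{|\lambda_i|}{\sqrt{p_i}} \cdot \sqrt{p_i},$$
which is legitimate since $p_i > 0$. Applying Cauchy--Schwarz to the two sequences $(|\lambda_i|/\sqrt{p_i})_i$ and $(\sqrt{p_i})_i$ gives
$$\|\lambda\|_1^2 \;\leq\; \left(\sum_{i=1}^n \frac{\lambda_i^2}{p_i}\right)\left(\sum_{i=1}^n p_i\right) \;=\; f(p)\cdot s,$$
where the last equality uses the feasibility constraint $\sum_i p_i = s$. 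Rearranging gives $f(p) \geq \|\lambda\|_1^2/s$, which is the desired bound.

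For the equality condition, recall that Cauchy--Schwarz is tight iff the two sequences are proportional, i.e. there is a constant $c$ with $|\lambda_i|/\sqrt{p_i} = c\sqrt{p_i}$, equivalently $p_i = |\lambda_i|/c$. Using $\sum_i p_i = s$ forces $c = \|\lambda\|_1/s$, giving $p_i = |\lambda_i|s/\|\lambda\|_1$ as claimed. Conversely, plugging this $p$ back into $f$ yields exactly $\|\lambda\|_1^2/s$, so the bound is achieved.

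There is no real obstacle here; the only subtlety is that the proof as stated implicitly assumes $\lambda_i \neq 0$ for all $i$ so that the ratio $|\lambda_i|/\sqrt{p_i}$ is well defined (and so that the prescribed $p_i$ lies in $(0,\infty)$). If some $\lambda_i = 0$, the corresponding term contributes nothing to $f$ and we may simply restrict the optimization to the support of $\lambda$, then set $p_i = 0$ on the complement (or, to remain strictly in the open feasible set of \eqref{eq:opt_prob2}, note that $f$ has no minimizer with $p_i > 0$ on indices where $\lambda_i = 0$, but the infimum is still $\|\lambda\|_1^2/s$, achieved in the limit). An alternative derivation via Lagrange multipliers would give the same critical point $p_i \propto |\lambda_i|$, and convexity of $f$ on the positive orthant (as a sum of the convex functions $\lambda_i^2/p_i$) would confirm global optimality; but the Cauchy--Schwarz route is cleaner and directly exhibits the lower bound.
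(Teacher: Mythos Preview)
Your proof is correct and essentially identical to the paper's: both apply Cauchy--Schwarz to the sequences $(|\lambda_i|/\sqrt{p_i})_i$ and $(\sqrt{p_i})_i$ and read off the equality case from proportionality. The paper merely packages the same computation by naming the two vectors $\alpha$ and $\beta$ and writing $f(p)=\tfrac{1}{s}\|\alpha\|_2^2\|\beta\|_2^2$ before invoking Cauchy--Schwarz; your added remark on the $\lambda_i=0$ case is not in the paper's proof of this lemma but is consistent with its later KKT discussion.
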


	Lemma \ref{thm:opt_prob2} implies that if we ignore the constraint that $p_i \leq 1$, then the optimal $p$ is achieved by setting $p_i = |\lambda_i|s/\|\lambda\|_1$. If the quantity in the right-hand side is greater than 1, this does not give us an actual probability. This leads to the following definition.

	\begin{definition}\label{def:s-balanced}An atomic decomposition $g = \sum_{i=1}^n \lambda_ia_i$ is $s$-unbalanced at entry $i$ if $|\lambda_i|s > \|\lambda\|_1$.
	\end{definition}

	Fix the atomic decomposition of $g$. If there are no $s$-unbalanced entries then we say that the $g$ is $s$-balanced. We have the following lemma which guarantees that $g$ is $s$-balanced for $s$ not too large.

	\begin{lemma}\label{lem:suff_balanced}
	An atomic decomposition $g = \sum_{i=1}^n \lambda_ia_i$ is $s$-balanced iff $s \leq \|\lambda\|_1/\|\lambda\|_\infty$.\end{lemma}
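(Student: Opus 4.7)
The plan is to prove this lemma by directly unpacking Definition~\ref{def:s-balanced} and reducing the condition to a single tight inequality. By definition, $g$ is $s$-balanced iff it is not $s$-unbalanced at any entry, that is, iff for every $i \in \{1,\dots,n\}$ we have $|\lambda_i| s \leq \|\lambda\|_1$. The goal is to show that this family of $n$ inequalities is equivalent to the single inequality $s \leq \|\lambda\|_1 / \|\lambda\|_\infty$.

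First I would dispose of the degenerate case $\lambda = 0$: then every entry satisfies $|\lambda_i| s = 0 \leq \|\lambda\|_1 = 0$ trivially, so $g$ is $s$-balanced for all $s$, and by the usual convention $\|\lambda\|_1 / \|\lambda\|_\infty$ is unbounded (or the claim is vacuous). Assuming then that $\lambda \neq 0$, so $\|\lambda\|_\infty > 0$, I would split the quantified condition according to whether $\lambda_i = 0$ or not. For indices with $\lambda_i = 0$, the inequality $|\lambda_i| s \leq \|\lambda\|_1$ holds automatically. For indices with $\lambda_i \neq 0$, the inequality is equivalent to $s \leq \|\lambda\|_1 / |\lambda_i|$.

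Next I would take the conjunction over all nonzero entries: the family $\{s \leq \|\lambda\|_1 / |\lambda_i|\}_{i : \lambda_i \neq 0}$ holds simultaneously iff $s$ is at most the minimum of the right-hand sides, namely
\[
s \;\leq\; \min_{i : \lambda_i \neq 0} \frac{\|\lambda\|_1}{|\lambda_i|} \;=\; \frac{\|\lambda\|_1}{\max_{i} |\lambda_i|} \;=\; \frac{\|\lambda\|_1}{\|\lambda\|_\infty}.
\]
This gives both directions of the ``iff'' simultaneously and completes the proof.

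Essentially no step is hard here; the only subtlety is handling zero entries of $\lambda$ so that the division by $|\lambda_i|$ and the passage from $\min 1/|\lambda_i|$ to $1/\max |\lambda_i|$ are rigorously justified. Once that bookkeeping is done, the lemma is a one-line rearrangement of the definition.
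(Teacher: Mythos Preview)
Your argument is correct and is exactly the natural direct unpacking of Definition~\ref{def:s-balanced}; the paper itself does not give a separate proof of this lemma, treating it as immediate from the definition, so your write-up is if anything more detailed than necessary.
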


	Lemma \ref{thm:opt_prob2} gives us the optimal way to sparsify $s$-balanced vectors, since the $p$ that is optimal for \eqref{eq:opt_prob2} is feasible for \eqref{eq:opt_prob1}. Moreover, the iff condition in Lemma \ref{thm:opt_prob2} implies that the optimal assignment of the $p_i$ are between 0 and 1 iff $v$ is $s$-balanced. Suppose now that $g$ is $s$-unbalanced at entry $j$. We cannot assign $p_j$ as in \eqref{eq:opt_p}. We will show that setting $p_j = 1$ is optimal in this setting. This comes from the following lemma.

	\begin{lemma}\label{lem:unbalanced}
		Suppose that $g$ is $s$-unbalanced at entry $j$ and that $q$ is feasible in \eqref{eq:opt_prob1}. Then $\exists p$ that is feasible in \eqref{eq:opt_prob1} such that $f(p) \leq f(q)$ and $p_j = 1$.
	\end{lemma}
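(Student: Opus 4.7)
The plan is to take $p$ to be the global minimizer of \eqref{eq:opt_prob1} itself and to show that $p_j=1$ at this minimizer whenever $g$ is $s$-unbalanced at $j$. This immediately yields a feasible point with $f(p)\le f(q)$ for every feasible $q$, which is exactly the statement of the lemma. The main work is in extracting enough first-order information at the minimizer to force a contradiction from the $s$-unbalanced hypothesis.

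To set this up I would first argue that a minimizer $p^*$ exists: the feasible set is not closed because of the strict constraint $p_i>0$, but $f(p)=\sum_i\lambda_i^2/p_i\to\infty$ as any coordinate tends to $0^+$, so a standard compactness argument on sublevel sets yields attainment. Split the coordinates as $S=\{i:p_i^*=1\}$ and $T=\{i:p_i^*<1\}$. I would then derive swap-based optimality conditions by perturbing $p^*$ along sum-preserving directions that shift mass $\delta$ from one coordinate to another. For $i,k\in T$ both signs of $\delta$ are feasible, so the first-order condition forces $|\lambda_i|/p_i^*=|\lambda_k|/p_k^*$; for $i\in T$ and $k\in S$, only decreasing $p_k^*$ while increasing $p_i^*$ is feasible, which gives $|\lambda_k|\ge|\lambda_i|/p_i^*$. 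Calling the common value on $T$ by $c$, we have $p_i^*=|\lambda_i|/c$ for $i\in T$ and $|\lambda_k|\ge c$ for $k\in S$.

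Summing $p_i^*$ over $T$ gives $c=\sum_{i\in T}|\lambda_i|/(s-|S|)$. Combined with $|\lambda_k|\ge c$ on $S$, this yields $\|\lambda\|_1=\sum_{k\in S}|\lambda_k|+\sum_{i\in T}|\lambda_i|\ge c|S|+c(s-|S|)=cs$, hence $c\le\|\lambda\|_1/s$. Now suppose for contradiction that $p_j^*<1$, so $j\in T$. Then $|\lambda_j|=c\,p_j^*<c\le\|\lambda\|_1/s$, which contradicts the $s$-unbalanced hypothesis $|\lambda_j|s>\|\lambda\|_1$. Therefore $p_j^*=1$, and setting $p=p^*$ finishes the proof.

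The main obstacle I anticipate is making the swap/first-order argument rigorous without explicitly invoking KKT, since the authors have signaled they prefer an alternative exposition for this part of the paper. A secondary concern is handling degenerate cases, namely $T=\emptyset$ (which forces $s=n$, so the only feasible point has $p_j^*=1$ trivially) and $|T|=1$ (where the $T$-$T$ equality is vacuous, but the $T$-$S$ inequality alone still supplies the bound on $c$). Once those details are in place, the inequality chain $|\lambda_j|<c\le\|\lambda\|_1/s<|\lambda_j|$ closes the argument essentially in one line.
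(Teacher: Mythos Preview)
Your argument is correct, but it is genuinely different from the paper's own proof of this lemma. The paper proceeds by induction on $n$: it first establishes an auxiliary lemma showing that if one can find a subset $A\subseteq\{1,\ldots,n\}\setminus\{j\}$ on which $\lambda_A$ is $(s_A+q_j-1)$-balanced and $|\lambda_j|(s_A+q_j-1)>\|\lambda_A\|_1$, then an explicit reallocation of mass from $q_j$ onto $A$ (setting $p_j=1$ and redistributing the freed budget optimally over $A$ via Lemma~\ref{thm:opt_prob2}) produces the desired $p$. The induction step then shows that either the full complement $\{1,\ldots,n\}\setminus\{j\}$ works as such an $A$, or else one can peel off a further unbalanced coordinate and recurse on a smaller instance. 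The point is that the paper builds $p$ \emph{constructively from $q$}, never appealing to the global minimizer.

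Your route, by contrast, ignores $q$ entirely: you take $p$ to be the minimizer $p^*$ of \eqref{eq:opt_prob1}, derive the structure of $p^*$ via elementary two-coordinate swaps, and read off $p_j^*=1$ from the inequality $c\le\|\lambda\|_1/s$. This is essentially the KKT characterization the paper sketches in Appendix~\ref{sec:appendix_kkt}, just phrased via swap arguments rather than Lagrange multipliers. It is shorter and arguably cleaner for this particular lemma, and your handling of the edge cases ($T=\emptyset$, $|T|=1$) is fine. What the paper's inductive approach buys is an explicit transformation $q\mapsto p$ that never decreases the objective, which is slightly more informative than merely asserting the minimizer has $p_j=1$; it also dovetails with the paper's stated aim of understanding the variance achieved in \eqref{eq:opt_prob1} without first solving the full KKT system. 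But for the bare statement of Lemma~\ref{lem:unbalanced}, your approach is entirely adequate.
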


	Lemmas \ref{thm:opt_prob2} and \ref{lem:unbalanced} imply the following theorem about solutions to \eqref{eq:opt_prob1}.

	\begin{theorem}\label{thm:all}Suppose we sparsify $g$ as in \eqref{eq:vhat} with sparsity budget $s$.
	\begin{enumerate}
		\item If $g$ is $s$-balanced, then
	$$\EE[\|\widehat{g}\|^2] \geq \frac{1}{s}\|\lambda\|_1^2 + \sum_{i \neq j} \lambda_i\lambda_j\langle a_i,a_j\rangle$$
	with equality if and only if $p_i = |\lambda_i|s/\|\lambda\|_1$.
		\item If $g$ is $s$-unbalanced, then
	$$\EE[\|\widehat{g}\|^2] > \frac{1}{s}\|\lambda\|_1^2 + \sum_{i \neq j} \lambda_i\lambda_j\langle a_i,a_j\rangle$$
	and is minimized by $p$ with $p_j = 1$ where $j = \argmax_{i = 1,\ldots, n} |\lambda_i|$.
	\end{enumerate}
	\end{theorem}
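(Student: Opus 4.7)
The plan is to reduce everything to bounding $f(p) = \sum_{i=1}^n \lambda_i^2/p_i$ over feasible $p$, since Lemma~\ref{lem:basic} gives $\EE[\|\widehat{g}\|^2] = f(p) + \sum_{i\neq j}\lambda_i\lambda_j\langle a_i, a_j\rangle$ and the cross-term is constant in $p$. The two cases of the theorem correspond exactly to whether or not the box constraints $p_i \leq 1$ are inactive at the optimum of the relaxation~\eqref{eq:opt_prob2}.

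For Case 1 ($s$-balanced), I would observe that every feasible $p$ for~\eqref{eq:opt_prob1} is also feasible for~\eqref{eq:opt_prob2}, so Lemma~\ref{thm:opt_prob2} immediately yields $f(p) \geq \|\lambda\|_1^2/s$. Adding back the constant cross-term gives the claimed lower bound. By Lemma~\ref{lem:suff_balanced}, $s$-balancedness is equivalent to $s \leq \|\lambda\|_1/\|\lambda\|_\infty$, which ensures that the relaxed minimizer $p_i^\star = |\lambda_i|s/\|\lambda\|_1$ satisfies $p_i^\star \leq 1$ and is thus feasible for~\eqref{eq:opt_prob1}. The ``iff'' clause of Lemma~\ref{thm:opt_prob2} then gives the equality characterization.

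For Case 2 ($s$-unbalanced), the same relaxation argument gives $f(p) \geq \|\lambda\|_1^2/s$, but the unique relaxed minimizer $p^\star$ is infeasible for~\eqref{eq:opt_prob1} (some $p_i^\star > 1$), so the ``iff'' clause in Lemma~\ref{thm:opt_prob2} forces the inequality to be strict. To exhibit a minimizer with $p_j = 1$ for $j = \argmax_i |\lambda_i|$, I would first use Lemma~\ref{lem:suff_balanced} to rewrite $s$-unbalancedness of $g$ as $s > \|\lambda\|_1/\|\lambda\|_\infty$, so that $|\lambda_j| s = \|\lambda\|_\infty s > \|\lambda\|_1$; hence $g$ is $s$-unbalanced at this specific $j$. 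Then, given any feasible $q$, Lemma~\ref{lem:unbalanced} produces a feasible $p$ with $p_j = 1$ and $f(p) \leq f(q)$, which shows the infimum of $f$ over the full feasible set is attained at some point with $p_j = 1$.

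The main obstacle, though light given the prior lemmas, is cleanly linking the ``set-level'' hypothesis ``$g$ is $s$-unbalanced'' in the theorem with the ``entry-level'' hypothesis ``$s$-unbalanced at $j$'' required by Lemma~\ref{lem:unbalanced}; this link goes through Lemma~\ref{lem:suff_balanced} applied specifically to the largest-magnitude index. Aside from this bookkeeping and the use of the ``iff'' direction of Lemma~\ref{thm:opt_prob2} to promote the weak inequality to a strict one in Case~2, the proof is a direct reduction to results already established.
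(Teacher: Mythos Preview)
Your proposal is correct and mirrors the paper's approach exactly: the paper states that Lemmas~\ref{thm:opt_prob2} and~\ref{lem:unbalanced} imply Theorem~\ref{thm:all} without further elaboration, and your write-up is precisely the natural derivation one would supply, using Lemma~\ref{lem:basic} to isolate $f(p)$, Lemma~\ref{thm:opt_prob2} (with its ``iff'' clause) for both the bound and the strictness in Case~2, Lemma~\ref{lem:suff_balanced} to pin down the argmax index as an unbalanced entry, and Lemma~\ref{lem:unbalanced} to force $p_j=1$.
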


	This theorem implies that Algorithm \ref{alg:opt_sparse} produces a vector $p \in \real^n$ solving \eqref{eq:opt_prob1}. Note that due to the sorting requirement in the input, the algorithm requires $O(n\log n)$ operations. As we discuss in Appendix \ref{sec:appendix_kkt}, we could instead do this in $O(sn)$ operations by, instead of sorting and iterating through the values in order, simply selecting the next unvisited index $i$ maximizing $|\lambda_i|$ and performing the same test/updates. As we show in Appendix \ref{sec:appendix_kkt}, we need to select at most $s$ indices before the if statement in Algorithm \ref{alg:opt_sparse} holds. Whether to sort or do selection depends on the size of $s$ relative to $\log n$.


\section{Relation to QSGD and TernGrad}\label{sec:comparison}

	In this section, we will discuss how \atomsgd{} is related to two recent quantization schemes, 1-bit QSGD \cite{alistarh2017qsgd} and TernGrad \cite{wen2017terngrad}. We will show that in certain cases, these schemes are versions of the \atomsgd{} for a specific sparsity budget $s$. Both schemes use the entry-wise atomic decomposition.

	\subsection{1-bit QSGD}

		QSGD takes as input $g \in \real^n$ and $b \geq 1$. This $b$ governs the number of quantization buckets. When $b = 1$, this is referred to as 1-bit QSGD. 1-bit QSGD produces a random vector $Q(g)$ defined by
		$$Q(g)_i =  \|g\|_2\sign(g_i)\zeta_i.$$
		Here, the $\zeta_i \sim \text{Bernoulli}(|g_i|/\|g\|_2)$ are independent random variables. A straightforward computation shows that $Q(g)$ can be defined equivalently by
		\begin{equation}\label{eq:qsgd_alt}
		Q(g)_i = \dfrac{g_it_i}{|g_i|/\|g\|_2}\end{equation}
		where $t_i \sim \text{Bernoulli}(|g_i|/\|g\|_2)$. Therefore, 1-bit QSGD exactly uses the atomic sparsification framework in \eqref{eq:vhat} with $p_i = |g_i|/\|g\|_2$. The total sparsity budget is therefore given by
		$$s = \sum_{i=1}^n p_i = \dfrac{\|g\|_1}{\|g\|_2}.$$

		By Lemma \ref{lem:suff_balanced} any $g$ is $s$-balanced for this $s$. Therefore, Theorem \ref{thm:all} implies that the optimal way to assign $p_i$ with this given $s$ is $p_i = |g_i|/\|g\|_2$. Since this agrees with \eqref{eq:qsgd_alt}, this implies that 1-bit QSGD performs variance-optimal entry-wise sparsification for sparsity budget $s = \|g\|_1/\|g\|_2$.

	\subsection{TernGrad}

		Similarly, TernGrad takes as input $g \in \real^n$, and produces a sparsified version $T(g)$ given by
		$$T(g)_i = \|g\|_\infty\sign(g_i)\zeta_i$$
		where $\zeta_i \sim \text{Bernoulli}(|g_i|/\|g\|_\infty)$. A straightforward computation shows that $T(g)$ can be defined equivalently by
		\begin{equation}\label{eq:terngrad_alt}
		T(g)_i = \dfrac{g_it_i}{|g_i|/\|g\|_\infty}\end{equation}
		where $t_i \sim\text{Bernoulli}(|g_i|/\|g\|_\infty)$. Therefore, TernGrad exactly uses the atomic sparsification framework in \eqref{eq:vhat} with $p_i = |g_i|/\|g\|_\infty$. The total sparsity budget is given by
		$$s = \sum_{i=1}^n p_i = \dfrac{\|g\|_1}{\|g\|_\infty}.$$

		By Lemma \ref{lem:suff_balanced}, any $g$ is $s$-balanced for this $s$. Therefore, Theorem \ref{thm:all} implies that the optimal way to assign $p_i$ with this given $s$ is $p_i = |g_i|/\|g\|_\infty$. This agrees with \eqref{eq:terngrad_alt}. Therefore, TernGrad performs variance-optimal entry-wise sparsification for sparsity budget $s = \|g\|_1/\|g\|_\infty$.

	\subsection{$\ell_q$-quantization}

		We can generalize both of these with the following quantization method, which we refer to as $\ell_q${\it-quantization}. Fix $q \in (0,\infty]$. Given $g \in \real^n$, we define the $\ell_q$-quantization of $g$, denoted $L_q(g)$ by
		$$L_q(v)_i = \|g\|_q\sign(g_i)\zeta_i$$
		where $\zeta_i \sim\text{Bernoulli}(|g_i|/\|g\|_q)$. Note that for all $i$, $|g_i| \leq \|g\|_\infty \leq \|g\|_q$, so this does give us a valid probability. We can define $L_q(v)$ equivalently by
		\begin{equation}\label{eq:lq_alt}
		L_q(g)_i = \dfrac{g_it_i}{|g_i|/\|g\|_q}\end{equation}
		where $t_i \sim\text{Bernoulli}(|g_i|/\|g\|_q)$. Therefore, $\ell_q$-quantization exactly uses the atomic sparsification framework in \eqref{eq:vhat} with $p_i = |g_i|/\|g\|_q$. The total sparsity budget is therefore
		$$s = \sum_{i=1}^n p_i = \dfrac{\|g\|_1}{\|g\|_q}.$$

		By Lemma \ref{lem:suff_balanced}, the optimal way to assign $p_i$ with this given $s$ is $p_i = |g_i|/\|g\|_q$. Since this agrees with \eqref{eq:lq_alt}, Theorem \ref{thm:all} implies the following theorem.

		\begin{theorem}\label{thm:lqgrad}
		$\ell_q$-quantization performs atomic sparsification in the standard basis with $p_i = |g_i|/\|g\|_q$. This solves \eqref{eq:opt_prob1} for $s = \|g\|_1/\|g\|_q$ and satisfies $\EE[\|L_q(g)\|_2^2] = \|g\|_1\|g\|_q$.\end{theorem}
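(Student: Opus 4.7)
The plan is to assemble Theorem \ref{thm:lqgrad} by stringing together the observations made in Section \ref{sec:comparison} with a single application of Theorem \ref{thm:all}. The first claim---that $\ell_q$-quantization is an instance of atomic sparsification in the standard basis with $p_i = |g_i|/\|g\|_q$---is immediate from the equivalent form \eqref{eq:lq_alt}, which is exactly \eqref{eq:vhat} with atoms $a_i = e_i$ and Bernoulli parameters $p_i = |g_i|/\|g\|_q$. I would simply note this identification, which also implicitly uses $|g_i| \leq \|g\|_q$ to guarantee $p_i \in (0,1]$.

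Next, I would compute the sparsity budget directly from the definition: $s = \sum_{i=1}^n p_i = \sum_{i=1}^n |g_i|/\|g\|_q = \|g\|_1/\|g\|_q$. To invoke Theorem \ref{thm:all}, I need $g$ to be $s$-balanced for this $s$. The key inequality is $\|g\|_q \geq \|g\|_\infty$ for every $q \in (0,\infty]$, which follows from $\|g\|_q^q \geq |g_j|^q$ for every coordinate $j$ (with the obvious convention at $q = \infty$). Consequently $s = \|g\|_1/\|g\|_q \leq \|g\|_1/\|g\|_\infty$, and Lemma \ref{lem:suff_balanced} gives that $g$ is $s$-balanced. Theorem \ref{thm:all}, part 1, then identifies the variance-optimal assignment as $p_i = |g_i|s/\|g\|_1 = |g_i|/\|g\|_q$, which agrees with \eqref{eq:lq_alt}.

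Finally, for the variance identity, I would apply the equality case of Theorem \ref{thm:all}, part 1, with $\lambda_i = g_i$ and $a_i = e_i$. Because the standard basis is orthonormal, every cross term $\langle a_i,a_j\rangle$ with $i \neq j$ vanishes, leaving
\[
\EE[\|L_q(g)\|_2^2] \;=\; \frac{1}{s}\|g\|_1^2 \;=\; \frac{\|g\|_q}{\|g\|_1}\,\|g\|_1^2 \;=\; \|g\|_1\,\|g\|_q,
\]
which is the advertised identity.

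There is no substantive obstacle: the proof is a bookkeeping exercise that fuses the identification in \eqref{eq:lq_alt} with the optimality and variance formulas already granted by Theorem \ref{thm:all}. The only spot requiring a brief justification is the norm comparison $\|g\|_q \geq \|g\|_\infty$ used to certify $s$-balancedness via Lemma \ref{lem:suff_balanced}; everything else reduces to substitution.
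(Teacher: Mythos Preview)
Your proposal is correct and follows essentially the same approach as the paper: identify $\ell_q$-quantization with \eqref{eq:vhat} via \eqref{eq:lq_alt}, compute $s=\|g\|_1/\|g\|_q$, use the inequality $\|g\|_q\ge\|g\|_\infty$ together with Lemma~\ref{lem:suff_balanced} to obtain $s$-balancedness, and then read off optimality and the second-moment value from Theorem~\ref{thm:all}. Your write-up is in fact slightly more explicit than the paper's, which leaves the variance computation $\tfrac{1}{s}\|g\|_1^2=\|g\|_1\|g\|_q$ implicit.
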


		In particular, for $q = 2$ we get 1-bit QSGD while for $q = \infty$, we get TernGrad. This shows strong connections between quantization and sparsification methods. Note that as $q$ increases, the sparsity budget for $\ell_q$-quantization increases while the variance decreases. The choice of whether to set $q = 2$, $q = \infty$, or $q$ to something else is therefore dependent on the total expected sparsity one is willing to tolerate, and can be tuned for different distributed scenarios.


\section{Spectral-\atomsgd{}: Sparsifying the Singular Value Decomposition} \label{sec:compare_matrix_sparse}

	In this section we compare different methods for matrix sparsification. The first uses \atomsgd{} on the entry-wise decomposition of a matrix, and the second uses \atomsgd{} on the singular value decomposition (SVD) of a matrix. We refer to this second approach as Spectral-\atomsgd{}. We show that under concrete conditions, Spectral-\atomsgd{} incurs less variance than sparsifying entry-wise. We present these conditions and connect them to the equivalence of certain matrix norms.

	\paragraph{Notation}
		For a rank $r$ matrix $X$, denote its singular value decomposition by
		$$X = \sum_{i=1}^r \sigma_i u_iv_i^T.$$
		Let $\sigma = [\sigma_1,\ldots, \sigma_r]^T$. Taking the $\ell_p$ norm of $\sigma$ gives a norm on $X$, referred to as the Schatten $p$-norm. For $p = 1$, we get the spectral norm $\|\cdot\|_*$, for $p = 2$ we get the Frobenius norm $\|\cdot\|_F$, and for $p = \infty$, we get the spectral norm $\|\cdot\|_2$. We define the $\ell_{p,q}$ norm of a matrix $X$ by
		$$\|X\|_{p,q} = \left(\sum_{j=1}^m(\sum_{i=1}^n |X_{i,j}|^p)^{q/p}\right)^{1/q}.$$
		When $p = q = \infty$, we define this to be $\|X\|_{\max}$ where $\|X\|_{\max} := \max_{i,j} |X_{i,j}|$.

	\paragraph{Comparing matrix sparsification methods:}Suppose that $V$ is the vector space of real $n\times m$ matrices. Given $X \in V$, there are two standard atomic decompositions of $X$. The first is the entry-wise decomposition
	$$X = \sum_{i,j}X_{i,j}e_ie_j^T.$$
	The second is the singular value decomposition
	$$X = \sum_{i=1}^r \sigma_iu_iv_i^T.$$
	If $r$ is small, it may be more efficient to communicate the $r(n+m)$ entries of the SVD, rather than the $nm$ entries of the matrix. Let $\widehat{X}$ and $\widehat{X}_\sigma$ denote the random variables in \eqref{eq:vhat} corresponding to the entry-wise decomposition and singular value decomposition of $X$, respectively. We wish to compare these two sparsifications.

	In Table \ref{table:matrix_sparse}, we compare the communication cost and second moment of these two methods. The communication cost is the expected number of non-zero elements (real numbers) that need to be communicated. For $\widehat{X}$, a sparsity budget of $s$ corresponds to $s$ non-zero entries we need to communicate. For $\widehat{X}_\sigma$, a sparsity budget of $s$ gives a communication cost of $s(n+m)$ due to the singular vectors. We compare the optimal second moment from Theorem \ref{thm:all}.

	\begin{table}[ht]
		\caption{Communication cost versus second moment of singular value sparsification and vectorized matrix sparsification of a $n\times m$ matrix.}
		\label{table:matrix_sparse}
		\begin{center}
		\begin{sc}
		\begin{tabular}{lcc}
		\toprule
		 & \begin{tabular}{@{}c@{}}Communication \\ Cost\end{tabular} & \begin{tabular}{@{}c@{}}Second \\ Moment\end{tabular}\\
		\midrule
		Entry-wise & $s$ & $\dfrac{1}{s}\|X\|_{1,1}^2$ \Tstrut \Bstrut\\
		\hline
		SVD  & $s(n+m)$ & $\dfrac{1}{s}\|X\|_*^2$ \Tstrut \Bstrut\\
		\bottomrule
		\end{tabular}
		\end{sc}
		\end{center}
		\vskip -0.1in
	\end{table}	

	To compare the second moment of these two methods under the same communication cost, we $s$ and suppose $X$ is $s$-balanced entry-wise. By Theorem \ref{thm:all} and Lemma \ref{lem:suff_balanced}, the second moment in Table \ref{table:matrix_sparse} is achieved iff
	$$s \leq \dfrac{\|X\|_{1,1}}{\|X\|_{\max}}.$$

	To achieve the same communication cost with $\widehat{X}_{\sigma}$, we take a sparsity budget of $s' = s/(n+m)$. By Theorem \ref{thm:all} and Lemma \ref{lem:suff_balanced}, the second moment in Table \ref{table:matrix_sparse} is achieved iff
	$$s' = \dfrac{s}{n+m} \leq \dfrac{\|X\|_*}{\|X\|_2}.$$
	This leads to the following theorem.

	\begin{theorem}Suppose $X \in \real^{n\times m}$ and
	$$s \leq \min\left\{ \dfrac{\|X\|_{1,1}}{\|X\|_{\max}}, (n+m)\dfrac{\|X\|_*}{\|X\|_2}\right\}.$$
	Then $\widehat{X}_{\sigma}$ with sparsity $s' = s/(n+m)$ incurs the same communication cost as $\widehat{X}$ with sparsity $s$, and $\EE[\|\widehat{X}_{\sigma}\|^2] \leq \EE[\|\widehat{X}\|^2]$ if and only if
	$$(n+m)\|X\|_*^2 \leq \|X\|_{1,1}^2.$$\end{theorem}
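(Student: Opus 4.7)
The plan is to reduce the comparison to a direct application of Theorem~\ref{thm:all} and Lemma~\ref{lem:suff_balanced} on each of the two atomic decompositions, then exploit orthonormality to see that the cross terms in the second-moment formula of Lemma~\ref{lem:basic} vanish in both cases.

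First I would dispatch the communication-cost claim: the SVD sparsification with budget $s' = s/(n+m)$ transmits in expectation $s'$ rank-one factors $u_i v_i^T$, each described by $n+m$ real numbers, so the expected payload is $s'(n+m) = s$, matching the expected $s$ nonzero scalars of $\widehat{X}$.

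Next I would compute the two second moments. For the entry-wise decomposition, the atoms $e_i e_j^T$ are orthonormal in the Frobenius inner product, so $\langle a_i,a_j\rangle = 0$ whenever $i \neq j$. The coefficient vector $\lambda$ is just the vectorization of $X$, so $\|\lambda\|_1 = \|X\|_{1,1}$ and $\|\lambda\|_\infty = \|X\|_{\max}$; hence the assumption $s \leq \|X\|_{1,1}/\|X\|_{\max}$ combined with Lemma~\ref{lem:suff_balanced} makes $X$ entry-wise $s$-balanced, and Theorem~\ref{thm:all} gives $\EE[\|\widehat{X}\|^2] = \|X\|_{1,1}^2/s$. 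For the singular value decomposition, the atoms $u_i v_i^T$ have unit Frobenius norm and satisfy $\langle u_i v_i^T, u_j v_j^T\rangle = (u_i^T u_j)(v_i^T v_j) = 0$ for $i \neq j$ by orthonormality of the singular vectors. Here the coefficient vector $\lambda$ is the vector of singular values, so $\|\lambda\|_1 = \|X\|_*$ and $\|\lambda\|_\infty = \|X\|_2$; the bound $s' \leq \|X\|_*/\|X\|_2$ together with Lemma~\ref{lem:suff_balanced} makes $X$ $s'$-balanced with respect to its SVD, so Theorem~\ref{thm:all} yields $\EE[\|\widehat{X}_{\sigma}\|^2] = \|X\|_*^2/s' = (n+m)\|X\|_*^2/s$. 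Comparing the two closed forms, $\EE[\|\widehat{X}_{\sigma}\|^2] \leq \EE[\|\widehat{X}\|^2]$ holds iff $(n+m)\|X\|_*^2 \leq \|X\|_{1,1}^2$, which is the claimed equivalence.

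The only real bookkeeping obstacle is to recognize that the two $s$-balancedness hypotheses, one for each decomposition, are exactly what the minimum in the statement packages together: the first argument of the $\min$ is the entry-wise balance threshold from Lemma~\ref{lem:suff_balanced}, while the second argument, after multiplication by $(n+m)$ to convert $s'$ back to $s$, is the SVD balance threshold. Once both conditions are in hand, the proof is a direct substitution into Theorem~\ref{thm:all} and a one-line inequality.
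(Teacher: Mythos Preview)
Your proposal is correct and follows essentially the same route as the paper: verify equal communication cost via $s'(n+m)=s$, invoke Lemma~\ref{lem:suff_balanced} on each decomposition (the two arguments of the $\min$ being precisely the respective balance thresholds), then apply Theorem~\ref{thm:all} to obtain the closed-form second moments and compare. Your explicit remark that the cross terms $\sum_{i\neq j}\lambda_i\lambda_j\langle a_i,a_j\rangle$ vanish by Frobenius-orthonormality of both atom families is a detail the paper leaves implicit in Table~\ref{table:matrix_sparse}, but otherwise the arguments coincide.
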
 

	To better understand this condition, we will make use of the following well-known fact concerning the equivalence of the $\ell_{1,1}$ and spectral norms.
	\begin{lemma}\label{lem:compare_norms}
	For any $n\times m$ matrix $X$ over $\real, \frac{1}{\sqrt{nm}}\|X\|_{1,1} \leq \|X\|_* \leq \|X\|_{1,1}$.\end{lemma}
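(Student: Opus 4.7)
The plan is to prove the two inequalities separately, using the atomic decomposition viewpoint for the upper bound and a Cauchy--Schwarz / norm comparison argument for the lower bound.

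For the upper bound $\|X\|_* \leq \|X\|_{1,1}$, I would apply the triangle inequality for the nuclear norm to the entry-wise atomic decomposition $X = \sum_{i,j} X_{i,j}\, e_i e_j^T$. Each summand is a rank-one matrix with unique singular value $|X_{i,j}|$, hence nuclear norm exactly $|X_{i,j}|$. Adding the contributions gives $\|X\|_* \leq \sum_{i,j} |X_{i,j}| = \|X\|_{1,1}$.

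For the lower bound $\frac{1}{\sqrt{nm}} \|X\|_{1,1} \leq \|X\|_*$, I would chain two standard comparisons. First, viewing the entries of $X$ as a vector in $\real^{nm}$ and applying Cauchy--Schwarz to $\sum_{i,j} |X_{i,j}| \cdot 1$ yields
$$\|X\|_{1,1} \leq \sqrt{nm}\,\|X\|_F.$$
Second, since the singular values $\sigma_i$ are nonnegative, their $\ell_2$-norm is dominated by their $\ell_1$-norm, so
$$\|X\|_F = \Bigl(\sum_i \sigma_i^2\Bigr)^{1/2} \leq \sum_i \sigma_i = \|X\|_*.$$
Combining the two gives $\|X\|_{1,1} \leq \sqrt{nm}\,\|X\|_*$, which rearranges to the desired bound.

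I do not expect any real obstacle: both inequalities are classical, and the only choice to make is which decomposition to use where. The entry-wise decomposition makes the upper bound a direct triangle-inequality estimate on rank-one atoms, while the lower bound is a purely vector-norm statement about the entries of $X$ combined with the elementary $\ell_2 \leq \ell_1$ inequality on the singular values. No SVD structural argument beyond nonnegativity of singular values is needed.
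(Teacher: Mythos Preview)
Your proof is correct. The upper bound argument is identical to the paper's: apply the triangle inequality for $\|\cdot\|_*$ to the entry-wise decomposition $X = \sum_{i,j} X_{i,j}\, e_i e_j^T$, using $\|e_ie_j^T\|_* = 1$.

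For the lower bound the paper takes a slightly different path: it applies the triangle inequality for $\|\cdot\|_{1,1}$ to the SVD $X = \sum_i \sigma_i u_i v_i^T$, then bounds each rank-one piece via $\|u_i v_i^T\|_{1,1} \leq \sqrt{nm}\,\|u_i v_i^T\|_F = \sqrt{nm}$, and sums to obtain $\|X\|_{1,1} \leq \sqrt{nm}\sum_i \sigma_i = \sqrt{nm}\,\|X\|_*$. Your route, $\|X\|_{1,1} \leq \sqrt{nm}\,\|X\|_F \leq \sqrt{nm}\,\|X\|_*$, invokes the same two facts (Cauchy--Schwarz on the entries, and $\ell_2 \leq \ell_1$ on the singular values) but applies the first one to $X$ globally rather than term by term. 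This is marginally more streamlined, since it avoids expanding the SVD into rank-one summands and simply uses $\|X\|_F \leq \|X\|_*$ as a single step; the underlying content is the same.
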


	For expository purposes, we give a proof of this fact in Appendix \ref{sec:matrix_norms} and show that these bounds are the best possible. In other words, there are matrices for which both of the inequalities are equalities. If the first inequality is tight, then $\EE[\|\widehat{X}_{\sigma}\|^2] \leq \EE[\|\widehat{X}\|^2]$, while if the second is tight then $\EE[\|\widehat{X}_{\sigma}\|^2] \geq \EE[\|\widehat{X}\|^2]$. As we show in the next section, using singular value sparsification can translate in to significantly reduced distributed training time.


\def\DetailedResultsRowWidth{\textwidth}
\def\DetailedResultsRowGap{\\}
\def\DetailedResultsColumnWidth{0.45\textwidth}
\def\DetailedResultsCurveColumnWidth{0.45\textwidth}
\def\DetailedResultsColumnGap{~}
\def\DetailedResultsIncludegraphicsScale{0.27}

\section{Experiments}
\label{sec:experiments}
We present an empirical study of Spectral-\atomsgd{} and compare it to the recently proposed QSGD \cite{alistarh2017qsgd}, and TernGrad \cite{wen2017terngrad}, on a different neural network models and data sets, under real distributed environments. Our main findings are as follows:
\vspace{-0.2cm}
\begin{itemize}[leftmargin=0.5cm]
	\item We observe that spectral-\atomsgd{} provides a useful alternative to entry-wise sparsification methods, it reduces communication compared to vanilla mini-batch SGD, and can reduce training time compared to QSGD and TernGrad by up to a factor of $2\times$ and $3\times$ respectively. For instance, on VGG11-BN trained on CIFAR-10, spectral-\atomsgd{}  with sparsity budget 3 achieves $3.96\times$ speedup over vanilla SGD, while 4-bit QSGD achieves $1.68\times$ on a cluster of 16, g2.2xlarge instances. Both \atomsgd{} and QSGD greatly outperform TernGrad as well.
	\item We observe that spectral-\atomsgd{} in distributed settings leads to models with negligible accuracy loss when combined with parameter tuning.
\end{itemize}

\paragraph{Implementation and setup} We compare spectral-\atomsgd{}\footnote{code available at:  \url{https://github.com/hwang595/ATOMO}}
with different sparsity budgets to $b$-bit QSGD across a distributed cluster
with a parameter server (PS), implemented in mpi4py \cite{dalcin2011parallel}
and PyTorch \cite{paszke2017automatic} and deployed on multiple types of instances in Amazon EC2 (\eg m5.4xlarge, m5.2xlarge, and g2.2xlarge), both PS and compute nodes are of the same type of instance. The PS implementation is standard, with a
few important modifications. At the most basic level, it receives gradients from the compute nodes and broadcasts the updated model once a batch has been received.

In our experiments, we use data augmentation (random crops, and flips), and tuned the step-size for every different setup as shown in Table~\ref{tab:stepsize} in Appendix~\ref{sec:hyperparam}. Momentum and regularization terms are switched off to make the hyperparamter search tractable and the results more legible.
Tuning the step sizes for this distributed network for three different datasets
and eight different coding schemes can be computationally intensive. As such, we only used small networks so that multiple networks could fit into GPU memory. To emulate the effect of larger networks, we use
synchronous message communication, instead of asynchronous.

Each compute node evaluates
gradients sampled from its partition of data. Gradients are then sparsified
through QSGD or spectral-\atomsgd{}, and then are sent back to the
PS. Note that spectral-\atomsgd{} transmits the weighted singular vectors sampled from the true gradient of a layer. The PS then combines these, and updates the model with the average gradient.
Our entire experimental pipeline is implemented in PyTorch \cite{paszke2017automatic} with
mpi4py \cite{dalcin2011parallel}, and deployed on either g2.2xlarge, m5.2xlarge and
m5.4xlarge instances in Amazon AWS EC2. We conducted our experiments on various
models, datasets, learning tasks, and neural network models as detailed in
Table~\ref{Tab:DataStat}.

\begin{table}[ht]
	\small
	\centering
	\begin{tabular}{|c|c|c|c|}
		\hline Dataset & CIFAR-10 & CIFAR-100 & SVHN \bigstrut\\
		\hline
		\# Data points & 60,000 & 60,000 & 600,000  \bigstrut\\
		\hline
		Model & ResNet-18 / VGG-11-BN & ResNet-18 & ResNet-18 \bigstrut\\
		\hline
		\# Classes & 10 & 100 & 10 \bigstrut\\
		\hline
		\# Parameters & 11,173k / 9,756k & 11,173k & 11,173k \bigstrut\\
		\hline
	\end{tabular}%
    \vspace{0.5em}
	\caption{The datasets used and their associated learning models and hyper-parameters.}
	\label{Tab:DataStat}%
\end{table}

\begin{figure}[ht]
    \centering
    \includegraphics[width=1\columnwidth]{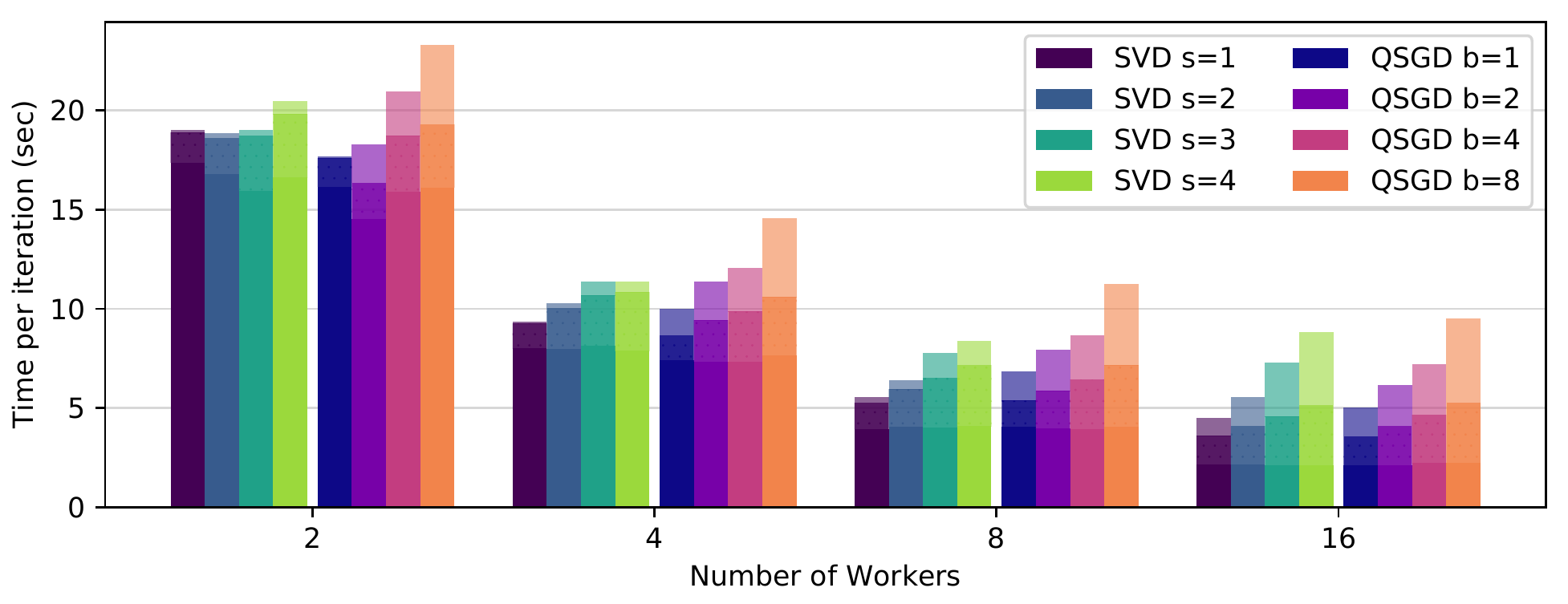}
    \caption{\small The timing of the gradient coding methods (QSGD and spectral-\atomsgd) for different
             quantization levels, $b$ bits and $s$ sparsity budget respectively for each worker when using a ResNet-34 model on CIFAR10. For brevity, we use SVD to denote spectral-\atomsgd. The bars represent the total iteration time and are divided into
             computation time
             (bottom, solid), encoding time (middle, dotted) and communication time (top, faded).}
    \label{scale-performance}
\end{figure}

\paragraph{Scalability} We study the scalability of these sparsification methods on clusters of different sizes. We used clusters with one PS and $n=2, 4, 8, 16$ compute nodes. We ran ResNet-34 on CIFAR-10 using mini-batch SGD with batch size $512$ split among compute nodes. The experiment was run on m5.4xlarge instances of AWS EC2 and the results are shown in Figure \ref{scale-performance}.

While increasing the size of the cluster, decreases the computational cost per worker, it causes the communication overhead to grow. We denote as computational cost, the time cost required by each worker for gradient computations, while the communication overhead is represented by the amount time the PS waits to receive the gradients by the slowest worker. This increase in communication cost is non-negligible, even for moderately-sized networks with sparsified gradients. We observed a trade-off in both sparsification approaches between the information retained in the messages after sparsification and the communication overhead.

\paragraph{End-to-end convergence performance} We evaluate the end-to-end convergence performance on different datasets and neural networks, training with spectral-\atomsgd (with sparsity budget $s= 1, 2 ,3, 4$), QSGD (with $n = 1, 2, 4, 8$ bits), and ordinary mini-batch SGD. The datasets and models are summarized in Table \ref{Tab:DataStat}. We use ResNet-18 \cite{he2016deep} and VGG11-BN \cite{simonyan2014very} for CIFAR-10 \cite{krizhevsky2009learning} and SVHN \cite{netzer2011reading}. Again, for each of these  methods we tune the step size. The experiments were run on a cluster of 16 compute nodes instantiated on g2.2xlarge instances.

The gradients of convolutional layers are 4 dimensional tensors with shape of $[x, y, k, k]$ where $x, y$ are two spatial dimensions and $k$ is the size of the convolutional kernel. However, matrices are required to compute the SVD for spectral-\atomsgd{}, and we choose to reshape each layer into a matrix of size $[x y/2, 2k^2]$. This provides more flexibility on the sparsity budget for the SVD sparsification. For QSGD, we use the bucketing and Elias recursive coding methods proposed in \cite{alistarh2017qsgd}, with bucket size equal to the number of parameters in each layer of the neural network.

\begin{figure*}[ht]
	\centering
	\subfigure[CIFAR-10, ResNet-18, Best of QSGD and SVD]{\includegraphics[width=0.25\linewidth]{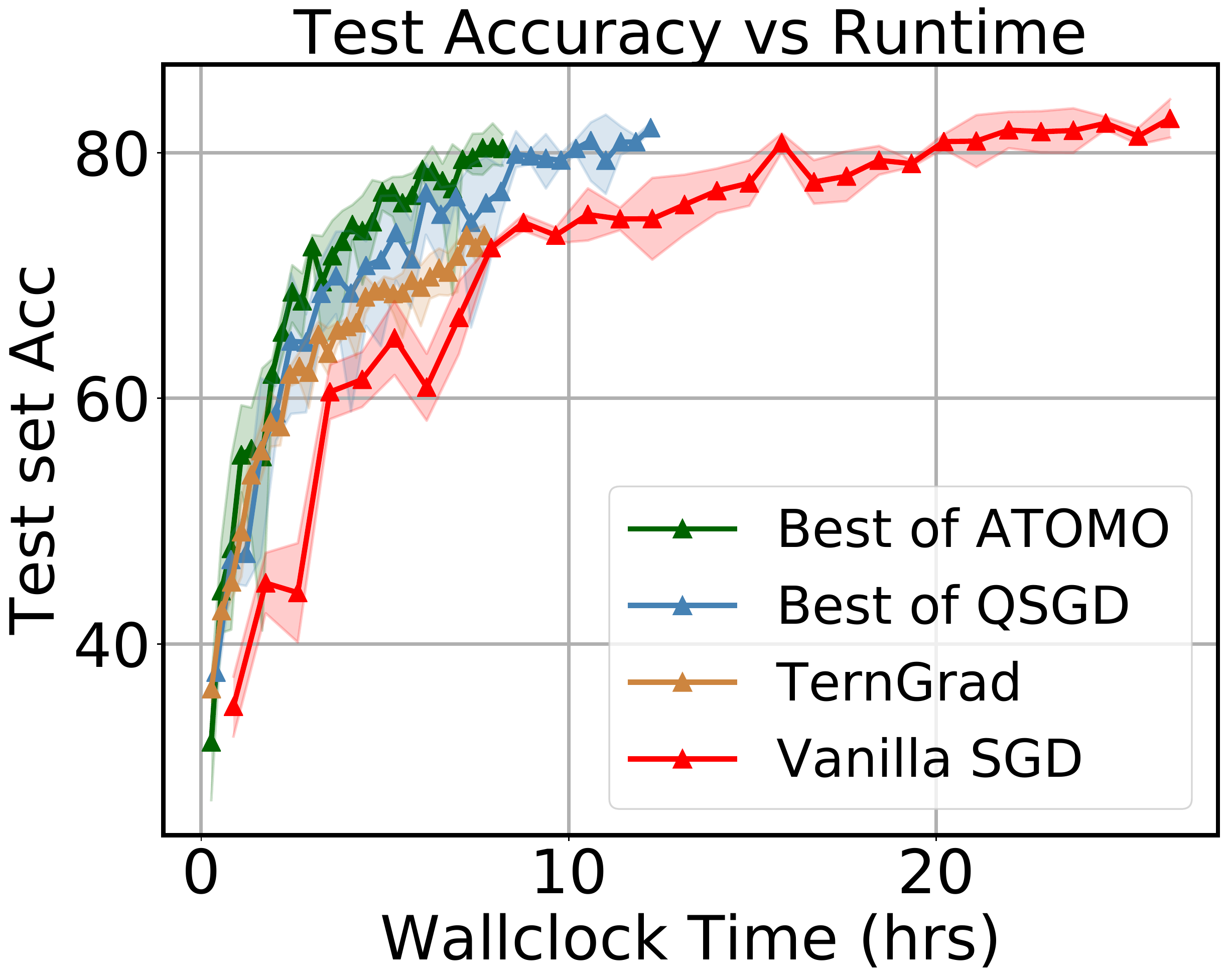}}
	\subfigure[SVHN, ResNet-18, Best of QSGD and SVD]{\includegraphics[width=0.25\linewidth]{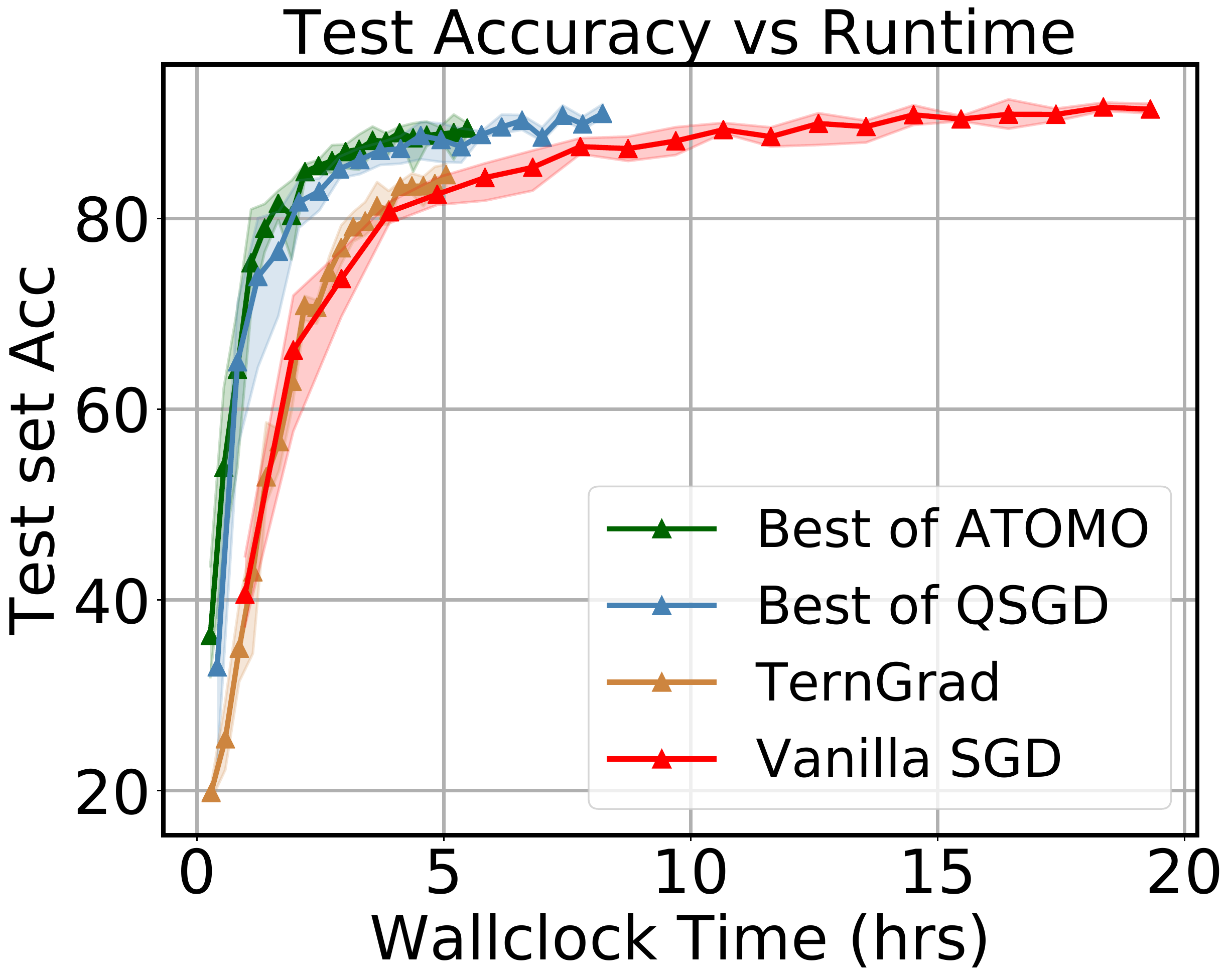}}
	\subfigure[CIFAR-10, VGG11, Best of QSGD and SVD]{\includegraphics[width=0.25\linewidth]{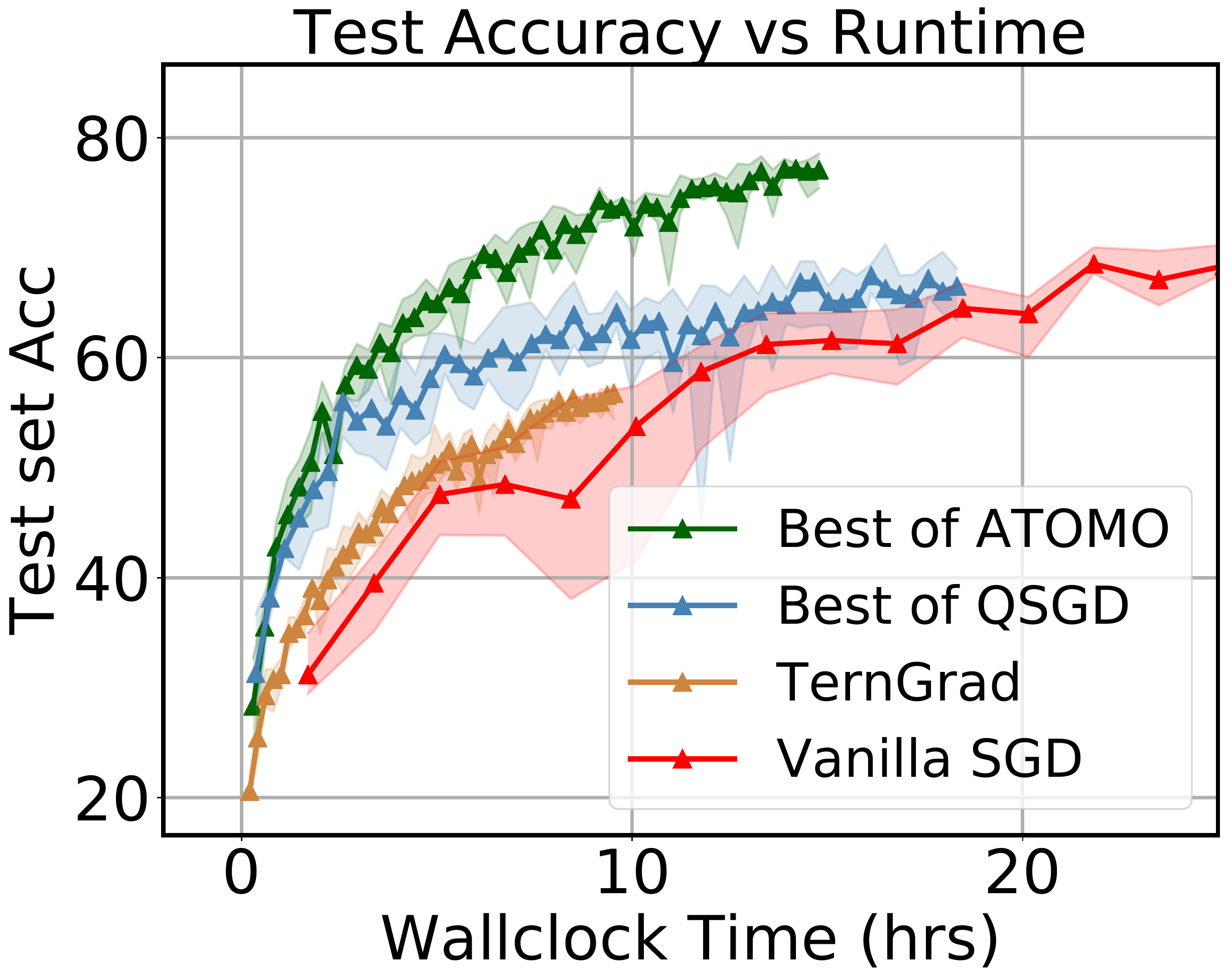}}
	\caption{\small Convergence rates for the best performance of QSGD and spectral-\atomsgd, alongside TernGrad and vanilla SGD. (a) uses ResNet-18 on CIFAR-10, (b) uses ResNet-18 on SVHN, and (c) uses VGG-11-BN on CIFAR-10. For brevity, we use SVD to denote spectral-\atomsgd{}.}
	\label{fig-convergence-runtime}
\end{figure*}

Figure \ref{fig-convergence-runtime} shows how the testing accuracy varies with  wall clock time. Tables~\ref{tab:SpeedupCIFAR10} and \ref{tab:SpeedupSVHN} give a detailed account of speedups of singular value sparsification compared to QSGD. In these tables, each method is run until a specified accuracy.

\begin{table}[ht]
	\centering
    \includegraphics[width=0.35\textwidth]{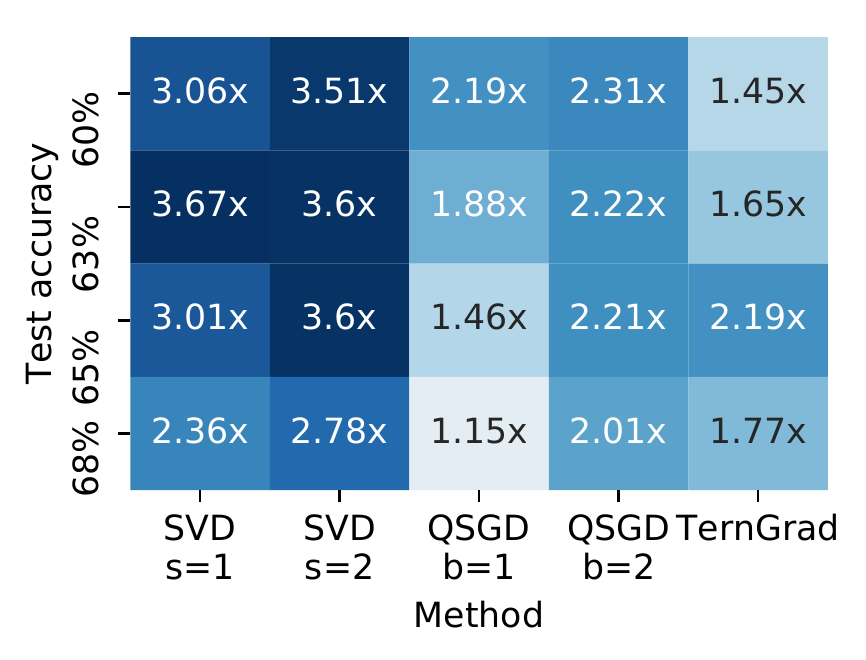}
    \includegraphics[width=0.35\textwidth]{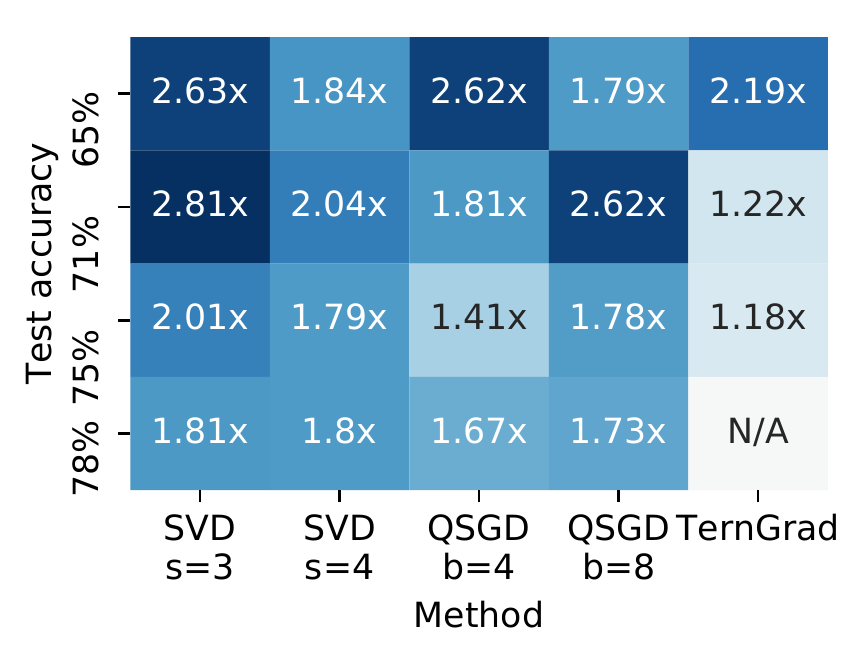}
	\caption{Speedups of spectral-\atomsgd{} with sparsity budget $s$, $b$-bit QSGD, and TernGrad using ResNet-18 on CIFAR10 over vanilla SGD. N/A stands for the method fails to reach a certain Test accuracy in fixed iterations.}
	\label{tab:SpeedupCIFAR10}%
\end{table}%

\begin{table}[ht]
	\centering
	\includegraphics[width=0.35\textwidth]{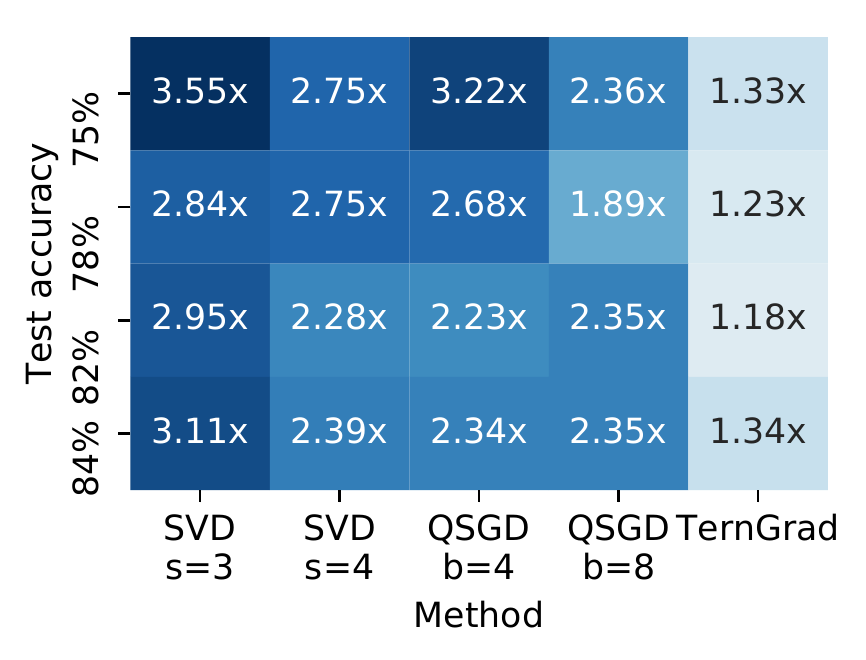}
	\includegraphics[width=0.35\textwidth]{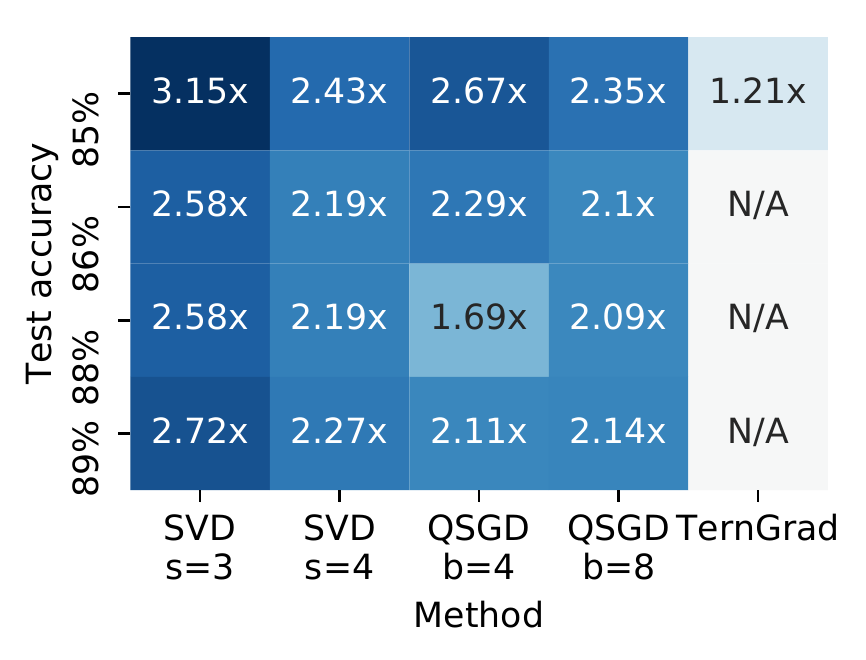}
	\caption{Speedups of spectral-\atomsgd{} with sparsity budget $s$ and $b$-bit QSGD, and TernGrad using ResNet-18 on SVNH over vanilla SGD. N/A stands for the method fails to reach a certain Test accuracy in fixed iterations.}
	\label{tab:SpeedupSVHN}%
\end{table}%

We observe that QSGD and \atomsgd{} speed up model training significantly and achieve similar accuracy to vanilla mini-batch SGD.
We also observe that the best performance is not achieve with the most sparsified, or quantized method, but the optimal method lies somewhere in the middle where enough information is preserved during the sparsification.
For instance, 8-bit QSGD converges faster than 4-bit QSGD, and spectral-\atomsgd{} with sparsity budget  3, or 4 seems to be the fastest. 
Higher sparsity can lead to a faster running time, but extreme sparsification can adversely affect convergence. For example, for a fixed number of iterations, 1-bit QSGD has the smallest time cost, but may converge much more slowly to an accurate model.

\vspace{5cm}

\section{Conclusion}\label{sec:conclusion}

	In this paper, we present and analyze \atomsgd{}, a general sparsification method for distributed stochastic gradient based methods. \atomsgd{} applies to any atomic decomposition, including the entry-wise and the SVD of a matrix. \atomsgd{} generalizes 1-bit QSGD and TernGrad, and provably minimizes the variance of the sparsified gradient subject to a sparsity constraint on the atomic decomposition. We focus on the use \atomsgd{} for sparsifying matrices, especially the gradients in neural network training. We show that applying \atomsgd{} to the singular values of these matrices can lead to faster training than both vanilla SGD or QSGD, for the same communication budget. We present extensive experiments showing that \atomsgd{} can lead to up to a $2\times$ speed-up in training time over QSGD and up to $3\times$ speed-up in training time over TernGrad.

	In the future, we plan to explore the use of \atomsgd{} with Fourier decompositions, due to its utility and prevalence in signal processing. More generally, we wish to investigate which atomic sets lead to reduced communication costs. We also plan to examine how we can sparsify and compress gradients in a joint fashion to further reduce communication costs. Finally, when sparsifying the SVD of a matrix, we only sparsify the singular values. We also note that it would be interesting to explore jointly sparsification of the SVD and and its singular vectors, which we leave for future work.

\section*{Acknowledgement}\label{sec:acknowledge}
This work was supported in part by AWS Cloud Credits for Research from Amazon.

\bibliographystyle{plain}
\bibliography{atomo}

\newpage

\appendix


\section{Proof of results}

	\subsection{Proof of Lemma \ref{thm:opt_prob2}}

		\begin{proof}
			Suppose we have some $p$ satisfying the conditions in \eqref{eq:opt_prob2}. We define two auxiliary vectors $\alpha, \beta \in \real^n$ by
			\begin{gather*}
			\alpha_i = \dfrac{|\lambda_i|}{\sqrt{p_i}}.\\
			\beta_i = \sqrt{p_i}.\end{gather*}

			Then note that using the fact that $\sum_i p_i = s$, we have
			\begin{align*}
			f(p) = \sum_{i=1}^n \dfrac{\lambda_i^2}{p_i} = \left(\sum_{i=1}^n \dfrac{\lambda_i^2}{p_i}\right)\left(\dfrac{1}{s}\sum_{i=1}^n p_i\right) = \dfrac{1}{s} \left(\sum_{i=1}^n \dfrac{\lambda_i^2}{p_i}\right)\left(\sum_{i=1}^n p_i\right) = \dfrac{1}{s}\|\alpha\|_2^2\|\beta\|_2^2.\end{align*}

			By the Cauchy-Schwarz inequality, this implies
			\begin{equation}\label{eq:c-s}
			f(p) = \dfrac{1}{s}\|\alpha\|_2^2\|\beta\|_2^2 \geq \dfrac{1}{s}|\langle \alpha,\beta\rangle|^2 =\dfrac{1}{s}\left(\sum_{i=1}^n |\lambda_i|\right)^2 = \dfrac{1}{s}\|\lambda\|_1^2.\end{equation}

			This proves the first part of Lemma \ref{thm:opt_prob2}. In order to have $f(p) = \frac{1}{s}\|\lambda\|_1^2$, \eqref{eq:c-s} implies that we need
			$$|\langle \alpha,\beta\rangle| = \|\alpha\|_2\|\beta\|_2.$$
			By the Cauchy-Schwarz inequality, this occurs iff $\alpha$ and $\beta$ are linearly dependent. Therefore, $c\alpha = \beta$ for some constant $c$. Solving, this implies $p_i = c|\lambda_i|$. Since $\sum_{i=1}^n p_i = s$, we have
			$$c\|\lambda\|_1 = \sum_{i=1}^n c|\lambda_i| = \sum_{i=1}^n p_i = s.$$

			Therefore, $c = \|\lambda\|_1/s$, which implies the second part of the theorem.
		\end{proof}

	\subsection{Proof of Lemma \ref{lem:unbalanced}}

		Fix $q$ that is feasible in \eqref{eq:opt_prob1}. To prove Lemma \ref{lem:unbalanced} we will require a lemma. Given the atomic decomposition $g = \sum_{i=1}^n \lambda_ia_i$, we say that $\lambda$ is $s$-unbalanced at $i$ if $|\lambda_i|s > \|\lambda\|_1$, which is equivalent to $g$ being unbalanced in this atomic decomposition at $i$. For notational simplicity, we will assume that $\lambda$ is $s$-unbalanced at $i = 1$. Let $A \subseteq \{2,\ldots, n\}$. We define the following notation:
		\begin{align*}
			s_A &= \sum_{i \in A} q_i.\\
			f_A(q) &= \sum_{i \in A}\dfrac{\lambda_i^2}{q_i}.\\
			(\lambda_A)_i &=   \begin{cases}
				\lambda_i, & \text{for } i \in A, \\
				0, & \text{for } i \notin A.
		    \end{cases}
	    \end{align*}

	    Note that under this notation, Lemma \ref{thm:opt_prob2} implies that for all $p > 0$,
	    \begin{equation}\label{eq:f_S}
	    	f_A(p) \geq \dfrac{1}{s_A}\|\lambda_A\|_1^2.
	    \end{equation}

		\begin{lemma}\label{lem:aux1}
			Suppose that $q$ is feasible and that there is some set $A \subseteq \{2,\ldots,n\}$ such that
			\begin{enumerate}
				\item \label{assume1}$\lambda_A$ is $(s_A+q_1-1)$-balanced.
				\item \label{assume2}$|\lambda_1|(s_A+q_1-1) > \|\lambda_A\|_1$.
			\end{enumerate}
			Then there is a vector $p$ that is feasible satisfying $f(p) \leq f(q)$ and $p_1 = 1$.
		\end{lemma}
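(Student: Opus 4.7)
The plan is to exhibit a concrete feasible $p$ with $p_1 = 1$ that beats $q$, using Lemma \ref{thm:opt_prob2} as the main tool. Define
\[
p_1 = 1, \qquad p_i = \frac{|\lambda_i|\,(s_A + q_1 - 1)}{\|\lambda_A\|_1} \text{ for } i \in A, \qquad p_i = q_i \text{ for } i \notin A \cup \{1\}.
\]
This takes the closed-form optimum of Lemma \ref{thm:opt_prob2} on the restricted problem over $A$ with reduced budget $s_A + q_1 - 1$. Hypothesis (\ref{assume1}) ($\lambda_A$ is $(s_A + q_1 - 1)$-balanced) forces $p_i \le 1$ for $i \in A$ via Lemma \ref{lem:suff_balanced}, and a bookkeeping check gives $\sum_i p_i = 1 + (s_A + q_1 - 1) + (s - q_1 - s_A) = s$, so $p$ is feasible in \eqref{eq:opt_prob1}.

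Next I would compare objectives. The coordinates outside $A \cup \{1\}$ contribute nothing, so
\[
f(p) - f(q) = \lambda_1^2\!\left(1 - \tfrac{1}{q_1}\right) + f_A(p) - f_A(q).
\]
A direct substitution yields $f_A(p) = \|\lambda_A\|_1^2/(s_A + q_1 - 1)$, while Lemma \ref{thm:opt_prob2} applied to $q$ restricted to $A$ gives $f_A(q) \ge \|\lambda_A\|_1^2/s_A$ (this is inequality \eqref{eq:f_S}). It therefore suffices to prove
\[
\lambda_1^2\,\frac{q_1-1}{q_1} \;+\; \frac{\|\lambda_A\|_1^2}{s_A+q_1-1} \;-\; \frac{\|\lambda_A\|_1^2}{s_A} \;\le\; 0.
\]

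The main obstacle is this final algebraic step, since both hypotheses must be used in tandem. Dispose of $q_1 = 1$ immediately (then $p = q$). For $q_1 < 1$, I would combine the last two terms over a common denominator to rewrite the left side as $(1-q_1)\bigl[\|\lambda_A\|_1^2/(s_A(s_A+q_1-1)) - \lambda_1^2/q_1\bigr]$; since $1 - q_1 > 0$, the task reduces to $q_1\|\lambda_A\|_1^2 \le \lambda_1^2\, s_A(s_A + q_1 - 1)$. Hypothesis (\ref{assume2}) squared says $\|\lambda_A\|_1^2 < \lambda_1^2 (s_A + q_1 - 1)^2$, so after multiplying by $q_1$ it is enough to check $q_1(s_A + q_1 - 1) \le s_A$, which rearranges to $-q_1(1-q_1) \le s_A(1-q_1)$, automatic from $q_1, s_A > 0$. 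Chaining these together closes the inequality and proves $f(p) \le f(q)$.
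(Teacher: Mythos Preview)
Your proof is correct and follows essentially the same route as the paper: you construct the identical candidate $p$ (set $p_1=1$, reassign the $A$-coordinates by the closed-form optimum of Lemma~\ref{thm:opt_prob2} with budget $s_A+q_1-1$, leave the rest alone), invoke Lemma~\ref{thm:opt_prob2} once for $f_A(p)$ and once as the lower bound \eqref{eq:f_S} for $f_A(q)$, and then reduce the remaining inequality to $q_1(s_A+q_1-1)\le s_A$ via hypothesis~(\ref{assume2}). The only cosmetic difference is that you separate out the $q_1=1$ case explicitly, whereas the paper's algebra absorbs it.
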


		\begin{proof}
			Suppose that such a set $A$ exists. Let $B = \{2,\ldots, n\}\backslash A$. Note that we have
			$$f(q) = \sum_{i =1}^n \dfrac{\lambda_i^2}{q_i} = \dfrac{\lambda_1^2}{q_1} + f_A(q) + f_B(q).$$

			By \eqref{eq:f_S}, this implies
			\begin{equation}\label{eq:q_bound}
			f(q) \geq \dfrac{\lambda_1^2}{q_1} + \dfrac{1}{s_A}\|\lambda_A\|_1^2 + f_B(q).\end{equation}

			Define $p$ as follows.
			$$p_i = \begin{cases}
				1, & \text{for } i = 1, \\
				\dfrac{|\lambda_i|(s_A+q_1-1)}{\|\lambda_A\|_1}, & \text{for } i \in A, \\
				q_i, & \text{for } i \notin A.\end{cases}$$

			Note that by Assumption \ref{assume1} and Lemma \ref{thm:opt_prob2}, we have
			$$f_A(p) = \dfrac{1}{s_A+q_1-1}\|\lambda_A\|_1^2.$$

			Since $p_i = q_i$ for $i \in B$, we have $f_B(p) = f_B(q)$. Therefore, 
			\begin{equation}\label{eq:p_bound}
			f(p) = \lambda_1^2 + \dfrac{1}{s_A+q_1-1}\|\lambda_A\|_1^2 + f_B(q).\end{equation}

			Combining \eqref{eq:q_bound} and \eqref{eq:p_bound}, we have
			\begin{align*}
			f(q)-f(p) &= \lambda_1^2\left(\dfrac{1}{q_1}-1\right)+\|\lambda_A\|_1^2\left(\dfrac{1}{s_A}-\dfrac{1}{s_A+q_1-1}\right)\\
			&= \lambda_1^2\left(\dfrac{1-q_1}{q_1}\right) + \|\lambda_A\|_1^2\left(\dfrac{q_1-1}{s_A(s_A+q_1-1)}\right).\end{align*}

			Combining this with Assumption \ref{assume2}, we have
			\begin{equation}\label{eq:suff_1_aux1}
			f(q)-f(p) \geq \dfrac{\|\lambda_A\|_1^2}{(s_A+q_1-1)^2}\left(\dfrac{1-q_1}{q_1}\right) + \|\lambda_A\|_1^2\left(\dfrac{q_1-1}{s_A(r_A+q_1-1)}\right).\end{equation}

			To show that the RHS of \eqref{eq:suff_1_aux1} is at most $0$, it suffices to show
			\begin{equation}\label{eq:suff_2_aux1}
			s_A \geq q_1(s_A+q_1-1).\end{equation}

			However, note that since $0 < q_1 < 1$, the RHS of \eqref{eq:suff_2_aux1} satisfies
			\begin{align*}
			q_1(s_A+q_1-1) = s_Aq_1 - q_1(1-q_1) \leq s_Aq_1 \leq s_A.\end{align*}

			Therefore, \eqref{eq:suff_2_aux1} holds, completing the proof.
		\end{proof}

		We can now prove Lemma \ref{lem:unbalanced}. In the following, we will refer to Conditions \ref{assume1} and \ref{assume2}, relative to some set $A$, as the conditions required by Lemma \ref{lem:aux1}.

		\begin{proof}
			We first show this in the case that $n = 2$. Here we have the atomic decomposition
			$$g = \lambda_1a_1 + \lambda_2a_2.$$
			The condition that $\lambda$ is $s$-unbalanced at $i = 1$ implies
			$$|\lambda_1|(s-1) > |\lambda_2|.$$

			In particular, this implies $s > 1$. For $A = \{2\}$, Condition \ref{assume1} is equivalent to 
			$$|\lambda_2|(s_A+q_1-2) \leq 0.$$

			Note that $s_A = q_2$ and that $q_1 + q_2 -2= s-2$ by assumption. Since $q_i \leq 1$, we know that $s-2 \leq 0$ and so Condition \ref{assume1} holds. Similarly, Condition \ref{assume2} becomes
			$$|\lambda_1|(s-1) > |\lambda_2|$$
			which holds by assumption. Therefore, Lemma \ref{lem:unbalanced} holds for $n = 2$.

			Now suppose that $n > 2$, $q$ is some feasible probability vector, and that $\lambda$ is $s$-unbalanced at index $1$. We wish to find an $A$ satisfying Conditions \ref{assume1} and \ref{assume2}. Consider $B = \{2,\ldots, n\}$. Note that for such $B$, $s_B+q_1-1 = s-1$. By our unbalanced assumption, we know that Condition \ref{assume2} holds for $B = \{2,\ldots, n\}$. If $\lambda_B$ is $(s-1)$-balanced, then Lemma \ref{lem:aux1} implies that we are done.

			Assume that $\lambda_B$ is not $(s-1)$-balanced. After relabeling, we can assume it is unbalanced at $i = 2$. Let $C = \{3,\ldots, n\}$. Therefore,
			\begin{equation}\label{eq:unbalanced_case}
			|\lambda_2|(s-2) > \|\lambda_C\|_1.\end{equation}

			Combining this with the $s$-unbalanced assumption at $i = 1$, we find
			\begin{align*}
			|\lambda_1| &> \dfrac{\|\lambda_B\|_1}{s-1}\\
			&= \dfrac{|\lambda_2|}{s-1} + \dfrac{\|\lambda_C\|_1}{s-1}\\
			&> \dfrac{\|\lambda_C\|_1}{(s-1)(s-2)} + \dfrac{\|\lambda_C\|_1}{s-1}\\
			&= \dfrac{\|\lambda_C\|_1}{s-2}.\end{align*}

			Therefore,
			\begin{equation}\label{eq:finally}
			|\lambda_1|(s-q_2-1) \geq |\lambda_1|(s-2) > \|\lambda_C\|_1.\end{equation}

			Let $D = \{1,3,4,\ldots, n\} = \{1,\ldots, n\}\backslash \{2\}$. Then note that \eqref{eq:finally} implies that $\lambda_D$ is $(s-q_2)$-unbalanced at $i = 1$. Inductively applying this theorem, this means that we can find a vector $p' \in \real^{|D|}$ such that $p'_1 = 1$ and $f_D(p') \leq f_D(q)$. Moreover, $s_D(p') = s-q_2$. Therefore, if we let $p$ be the vector that equals $p'$ on $D$ and with $p_2 = q_2$, we have
			$$f(p_2) = f_C(p') + \frac{\lambda_2^2}{q_2} \leq f_D(q) + \frac{\lambda_2^2}{q_2} = f(q).$$

			This proves the desired result.
		\end{proof}	


\section{Analysis of \atomsgd{} via the KKT Condtions}\label{sec:appendix_kkt}

	In this section we show how to derive Algorithm \ref{alg:opt_sparse} using the KKT conditions. Recall that we wish to solve the following optimization problem:
	\begin{equation}\label{eq:atomo_opt}
		\min_{p}~f(p):=\sum_{i=1}^n \dfrac{\lambda_i^2}{p_i}\quad \text{subject to}~~ \forall i,~0 < p_i \leq 1,~~\sum_{i=1}^n p_i = s.
	\end{equation}	

	We first note a few immediate consequences.
	\begin{enumerate}
		\item If $s > n$ then the problem is infeasible. Note that when $s \geq n$, the optimal thing to do is to set all $p_i = 1$, in which case no sparsification takes place.
		\item If $\lambda_i = 0$, then $p_i = 0$. This follows from the fact that this $p_i$ does not change the value of $f(p)$, and the objective could be decreased by allocating more to the $p_j$ associated to non-zero $\lambda_j$. Therefore we can assume that all $\lambda_i \neq 0$.
		\item If $|\lambda_i| \geq |\lambda_j| > 0$, then we can assume $p_i \geq p_j$. Otherwise, suppose $p_j > p_i$ but $|\lambda_i| \geq |\lambda_j|$. Let $p'$ denote the vector with $p_i, p_j$ switched. We then have
		\begin{align*}
		f(p)-f(p') &= \dfrac{\lambda_i^2-\lambda_j^2}{p_i} + \dfrac{\lambda_j^2-\lambda_i^2}{p_j}\\
		&= \lambda_i^2\left(\dfrac{1}{p_i}-\dfrac{1}{p_j}\right) - \lambda_j^2\left(\dfrac{1}{p_i}-\dfrac{1}{p_j}\right)\\
		&\geq 0.\end{align*}
	\end{enumerate}

	We therefore assume $0 < s \leq n$ and $|\lambda_1| \geq |\lambda_2| \geq \ldots \geq |\lambda_n| > 0$. As above we define $\lambda := [\lambda_1,\ldots, \lambda_n]^T$. While the formulation of \eqref{eq:atomo_opt} does not allow direct application of the KKT conditions, since we have a strict inequality of $0 < p_i$, this is fixed with the following lemma.

	\begin{lemma}The minimum of \eqref{eq:atomo_opt} is achieved by some $p^*$ satisfying
	$$p_i^* \geq \dfrac{s\lambda_i^2}{n\|\lambda\|_2^2}.$$\end{lemma}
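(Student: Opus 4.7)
The plan is to construct an explicit feasible point, use it to upper bound the optimal value, and then translate this upper bound into a per-coordinate lower bound that any minimizer must satisfy. This simultaneously establishes existence of the minimum (by confining the search to a compact subset) and the stated bound.

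First I would exhibit the uniform vector $p^{(0)}$ with $p_i^{(0)} = s/n$. Since $0 < s \leq n$, this is feasible: $0 < s/n \leq 1$ and $\sum_i p_i^{(0)} = s$. Its objective value is
\[ f(p^{(0)}) = \sum_{i=1}^n \frac{\lambda_i^2}{s/n} = \frac{n}{s}\|\lambda\|_2^2. \]
Hence the infimum over the (open) feasible set satisfies $\inf f \leq (n/s)\|\lambda\|_2^2$.

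Next I would show that any feasible $p$ with some coordinate $p_j < s\lambda_j^2/(n\|\lambda\|_2^2)$ is strictly worse than $p^{(0)}$. Indeed, since every term of $f$ is nonnegative,
\[ f(p) \geq \frac{\lambda_j^2}{p_j} > \frac{n\|\lambda\|_2^2}{s} = f(p^{(0)}), \]
so $p$ cannot be a minimizer. Consequently the search may be restricted to the set
\[ K := \Big\{\, p \,:\, \tfrac{s\lambda_i^2}{n\|\lambda\|_2^2} \leq p_i \leq 1 \text{ for all } i,\; \sum_i p_i = s \,\Big\}. \]
I would check that $K$ is nonempty (it contains $p^{(0)}$, since $s/n \geq s\lambda_i^2/(n\|\lambda\|_2^2)$ follows from $\lambda_i^2 \leq \|\lambda\|_2^2$) and note that the lower bounds are at most $1$ because $s\lambda_i^2/(n\|\lambda\|_2^2) \leq s/n \leq 1$.

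Finally, $K$ is closed and bounded, hence compact, and $f$ is continuous on $K$ (the denominators are bounded away from zero). So $f$ attains its minimum over $K$ at some $p^\ast \in K$, and by the inequality above this $p^\ast$ is also a global minimizer of the original problem. By construction, $p^\ast_i \geq s\lambda_i^2/(n\|\lambda\|_2^2)$ for every $i$, as required. There is no real obstacle here; the only mild subtlety is verifying that the proposed lower bounds are compatible with the box and sum constraints, which is why the uniform point $p^{(0)}$ plays a double role as both the upper-bound witness and the feasibility witness for $K$.
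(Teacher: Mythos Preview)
Your proposal is correct and follows essentially the same approach as the paper: introduce the uniform point $p^{(0)}_i = s/n$, use its objective value $n\|\lambda\|_2^2/s$ to derive the per-coordinate lower bound, restrict to the resulting compact box, and invoke continuity. You are in fact slightly more careful than the paper, since you explicitly verify that $K$ is nonempty (via $p^{(0)} \in K$) and that the lower bounds are compatible with the constraint $p_i \leq 1$.
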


	\begin{proof}Define $\overline{p}$ by $\overline{p}_i = s/n$. This vector is clearly feasible in \eqref{eq:atomo_opt}. Let $p$ be any feasible vector. If $f(p) \leq f(q)$ then for any $i \in [n]$ we have
	$$\dfrac{\lambda_i^2}{p_i} \leq f(p) \leq f(\overline{p}).$$
	Therefore, $p_i \geq \lambda_i^2/f(\overline{p})$. A straightforward computations shows that $f(\overline{p}) = n\|\lambda\|_2^2/s$. Note that this implies that we can restrict to the feasbile set
	$$\dfrac{s\lambda_i^2}{n\|\lambda\|_2^2} \leq p_i \leq 1.$$
	This defines a compact region $C$. Since $f$ is continuous on this set, its maximum value is obtained at some $p^*$.\end{proof}

	The KKT conditions then imply that at any point $p$ solving \eqref{eq:atomo_opt}, we must have
	\begin{equation}\label{eq:kkt1}
	0 \leq 1-p_i \perp \mu-\dfrac{\lambda_i^2}{p_i} \geq 0,~~i=1,2,\ldots,n\end{equation}
	for some $\mu \in \real$. Since $|\lambda_i| > 0$ for all $i$, we actually must have $\mu > 0$.	We therefore have two conditions for all $i$.
	\begin{enumerate}
		\item $p_i = 1 \implies \mu \geq \lambda_i^2$.
		\item $p_i < 1 \implies p_i = |\lambda_i|/\sqrt{\mu}$.
	\end{enumerate}
	Note that in either case, to have $p_1$ feasible we must have $\mu \geq \lambda_1^2$. Combining this with the fact that we can always select $p_1 \geq p_2 \geq\ldots \geq p_n$, we obtain the following partial characterization of the solution to \eqref{eq:atomo_opt}. For some $n_s \in [n]$, we have $p_1,\ldots, p_{n_s} = 1$ while $p_i = |\lambda_i|/\sqrt{\mu} \in (0,1)$ for $i = n_s+1,\ldots, n$.
	Combining this with the constraint that $\sum_{i=1}^n p_i = s$, we have
	\begin{equation}
	s = \sum_{i=1}^n p_i = n_s + \sum_{i=n_s+1}^n p_i = n_s + \sum_{i=n_s+1}^n \dfrac{|\lambda_i|}{\sqrt{\mu}}.\end{equation}
	Rearranging, we obtain
	\begin{equation}
	\mu = \dfrac{\left(\sum_{i=n_s+1}^n |\lambda_i|\right)^2}{(s-n_s)^2}\end{equation}
	which then implies that
	\begin{equation}\label{eq:p_i}
	p_i = 1,~~i = 1,\ldots,n_s,~~~p_i = \dfrac{|\lambda_i|(s-n_s)}{\sum_{j=n_s+1}^n |\lambda_j|},~~i = n_s+1,\ldots, n.\end{equation}

	Thus, we need to select $n_s$ such that the $p_i$ in \eqref{eq:p_i} are bounded above by 1. Let $n_s^*$ denote the first element of $[n]$ for which this holds. Then the condition that $p_i \leq 1$ for $i = n_s^*+1,\ldots, n$ is exactly the condition that $[\lambda_{n_s^*+1},\ldots,\lambda_n]$ is $(s-n_s)$-balanced (see Definition \ref{def:s-balanced}. In particular, Lemma \ref{thm:opt_prob2} implies that, fixing $p_i = 1$ for $i = 1,\ldots, n_s^*$, the optimal way to assign the remaining $p_i$ is by
	$$p_i = \dfrac{|\lambda_i|(s-n_s^*)}{\sum_{j=n_s^*+1}^n |\lambda_j|}.$$
	This agrees with \eqref{eq:p_i} for $n_s = n_s^*$. In particular, the minimal value of $f$ occurs at the first value of $n_s$ such that the $p_i$ in \eqref{eq:p_i} are bounded above by 1.

	Algorithm \ref{alg:opt_sparse} scans through the sorted $\lambda_i$ and finds the first value of $n_s$ for which the probabilities in \eqref{eq:p_i} are in $[0,1]$, and therefore finds the optimal $p$ for \eqref{eq:atomo_opt}. The runtime is dominated by the $O(n\log n)$ sorting cost. It is worth noting that we could perform the algorithm in $O(sn)$ time as well. Instead of sorting and then iterating through the $\lambda_i$ in order, at each step we could simply select the next largest $|\lambda_i|$ not yet seen and perform an analogous test and update as in the above algorithm. Since we would have to do the selection step at most $s$ times, this leads to an $O(sn)$ complexity algorithm.

\section{Equivalence of norms}\label{sec:matrix_norms}

    We are often interested in comparing norms on vectors spaces. This naturally leads to the following definition.

    \begin{definition}Let $V$ be a vector space over $\real$ or $\cmp$. Two norms $\|\cdot\|_a, \|\cdot\|_b$ are equivalent if there are positive constants $C_1, C_2$ such that
    $$C_1 \|x\|_a \leq \|x\|_b \leq C_2\|x\|_a$$
    for all $x \in V$.\end{definition}

    As it turns out, norms on finite-dimensional vector spaces are always equivalent.

    \begin{theorem}Let $V$ be a finite-dimensional vector space over $\real$ or $\cmp$. Then all norms are equivalent.\end{theorem}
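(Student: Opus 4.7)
The plan is to fix a convenient reference norm on $V$ and show that every other norm is equivalent to it; equivalence of norms is a transitive relation, so this suffices. Concretely, I would pick a basis $e_1,\ldots, e_n$ of $V$ and define the reference norm $\|x\|_2 := \bigl(\sum_i |x_i|^2\bigr)^{1/2}$, where $x = \sum_i x_i e_i$. The goal is then to produce constants $C_1, C_2 > 0$ such that $C_1 \|x\|_2 \leq \|x\| \leq C_2 \|x\|_2$ for every $x \in V$ and every norm $\|\cdot\|$ on $V$.

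The upper bound is the easy half. Using the triangle inequality and the Cauchy--Schwarz inequality,
\begin{equation*}
\|x\| = \Bigl\|\sum_i x_i e_i\Bigr\| \leq \sum_i |x_i|\,\|e_i\| \leq \Bigl(\sum_i |x_i|^2\Bigr)^{1/2}\Bigl(\sum_i \|e_i\|^2\Bigr)^{1/2} = C_2 \|x\|_2,
\end{equation*}
where $C_2 := \bigl(\sum_i \|e_i\|^2\bigr)^{1/2}$ depends only on the basis and the norm $\|\cdot\|$. A useful consequence is that the map $x \mapsto \|x\|$ is continuous with respect to the $\|\cdot\|_2$ topology, since $\bigl|\|x\|-\|y\|\bigr| \leq \|x-y\| \leq C_2\|x-y\|_2$.

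For the lower bound I would use a compactness argument. The unit sphere $S := \{x \in V : \|x\|_2 = 1\}$ is closed and bounded in $\|\cdot\|_2$; under the identification $V \cong \real^n$ (or $\cmp^n$) via the basis, $S$ is compact by Heine--Borel. The function $f(x) := \|x\|$ is continuous on $S$ by the remark above and is strictly positive there (since $0 \notin S$ and $\|x\| = 0 \iff x = 0$). Therefore $f$ attains its infimum $C_1 := \min_{x\in S} \|x\| > 0$. For any nonzero $x \in V$, the vector $x/\|x\|_2$ lies in $S$, so $\|x/\|x\|_2\| \geq C_1$, which gives $\|x\| \geq C_1 \|x\|_2$; the inequality is trivial at $x=0$.

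The main obstacle is exactly this compactness step: one must invoke Heine--Borel, which is specific to finite-dimensional real or complex spaces (and fails in infinite dimensions). All other steps are elementary manipulations with the triangle and Cauchy--Schwarz inequalities. Once the two bounds are established for a single reference norm, transitivity of equivalence closes out the theorem for arbitrary pairs of norms.
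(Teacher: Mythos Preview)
Your argument is the standard and correct proof of this classical fact: fix a reference Euclidean norm via a basis, get the upper bound from the triangle inequality and Cauchy--Schwarz, deduce continuity of the arbitrary norm in the Euclidean topology, and then extract the lower bound from compactness of the Euclidean unit sphere via Heine--Borel. Nothing is missing.

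However, there is nothing to compare against: the paper states this theorem without proof, citing it only as a well-known background fact before moving on to the specific comparison between $\|X\|_*$ and $\|X\|_{1,1}$. So your proposal simply supplies the omitted (textbook) argument.
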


    In order to compare norms, we often wish to determine the tightest constants which give equivalence between them. In Section \ref{sec:compare_matrix_sparse}, we are particularly interested in comparing the $\|X\|_*$ and $\|X\|_{1,1}$ on the space of $n\times m$ matrices. We have the following lemma.

    \begin{lemma}For all $n\times m$ real matrices, 
    $$\frac{1}{\sqrt{nm}}\|X\|_{1,1} \leq \|X\|_* \leq \|X\|_{1,1}.$$\end{lemma}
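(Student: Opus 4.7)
The plan is to prove the two inequalities separately by exploiting the SVD $X = \sum_{k=1}^{r} \sigma_k u_k v_k^T$, treating the nuclear norm $\|\cdot\|_*$ as a Schatten-1 (sum of singular values) and noting that rank-1 unit matrices have nuclear norm exactly 1.

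For the upper bound $\|X\|_* \leq \|X\|_{1,1}$, I would write the entry-wise decomposition $X = \sum_{i,j} X_{i,j}\, e_i e_j^T$ and apply the triangle inequality for the nuclear norm, using the fact that $\|e_i e_j^T\|_* = 1$ since $e_i e_j^T$ is itself a rank-1 matrix whose unique nonzero singular value equals $\|e_i\|_2 \|e_j\|_2 = 1$. This immediately gives $\|X\|_* \leq \sum_{i,j} |X_{i,j}|\cdot 1 = \|X\|_{1,1}$.

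For the lower bound $\|X\|_{1,1} \leq \sqrt{nm}\,\|X\|_*$, I would start from the SVD and bound each entry by
\[
|X_{i,j}| = \Bigl|\sum_{k=1}^{r} \sigma_k\, u_{k,i}\, v_{k,j}\Bigr| \leq \sum_{k=1}^{r} \sigma_k |u_{k,i}||v_{k,j}|,
\]
sum over $i,j$, and separate the outer sums to get
\[
\|X\|_{1,1} \leq \sum_{k=1}^{r} \sigma_k \Bigl(\sum_{i=1}^n |u_{k,i}|\Bigr)\Bigl(\sum_{j=1}^m |v_{k,j}|\Bigr).
\]
Then I would apply Cauchy–Schwarz to each inner sum, using that $u_k$ and $v_k$ are unit vectors: $\sum_i |u_{k,i}| \leq \sqrt{n}\,\|u_k\|_2 = \sqrt{n}$ and analogously $\sum_j |v_{k,j}| \leq \sqrt{m}$. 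Substituting yields $\|X\|_{1,1} \leq \sqrt{nm}\, \sum_k \sigma_k = \sqrt{nm}\,\|X\|_*$, which rearranges to the claimed lower bound.

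There is no real obstacle here — the only minor subtlety is making sure the triangle inequality is applied to the nuclear norm (which is indeed a norm, and is exactly the dual of the operator norm) rather than to the operator norm. Tightness can then be checked by extremal examples: the identity matrix $I_n$ (with $n=m$) makes the right inequality tight, since $\|I_n\|_* = n = \|I_n\|_{1,1}$, and the all-ones matrix $J$ makes the left inequality tight, since $\|J\|_{1,1} = nm$ and $\|J\|_* = \sqrt{nm}$ (its only nonzero singular value).
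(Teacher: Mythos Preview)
Your proof is correct and follows essentially the same approach as the paper: the upper bound via the entry-wise decomposition and triangle inequality for the nuclear norm is identical, and your lower bound via the SVD and Cauchy--Schwarz on $\sum_i |u_{k,i}|$ is the same computation the paper does (they route it through $\|u_k v_k^T\|_{1,1} \leq \sqrt{nm}\,\|u_k v_k^T\|_F$, but that is just your Cauchy--Schwarz step repackaged). Your tightness examples are also fine; the paper uses $e_1 e_1^T$ instead of $I_n$ for the right inequality, but both work.
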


    \begin{proof}
    Suppose that $X$ has the singular value decomposition
    $$X = \sum_{i=1}^r \sigma_iu_iv_i^T.$$

    We will first show the left inequality. First, note that for any $n\times m$ matrix $A$, $\|A\|_{1,1}\leq \sqrt{nm}\|A\|_F$. This follows directly from the fact that for a $n$-dimensional vector $v$, $\|v\|_1 \leq \sqrt{n}\|v\|_2$. We will also use the fact that for any vectors $u \in \real^n, v \in \real^m$, $\|uv^T\|_F = \|u\|_2\|v\|_2$. We then have
    \begin{align*}
    \|X\|_{1,1} &= \left\|\sum_{i=1}^r \sigma_iu_iv_i^T\right\|_{1,1}\\
    &\leq \sum_{i=1}^r \sigma_i \|u_iv_i^T\|_{1,1}\\
    & = \sum_{i=1}^r \sigma_i\sqrt{nm}\|u_iv_i^T\|_F\\
    &= \sum_{i=1}^r \sigma_i \sqrt{nm}\|u_i\|_2\|v_i\|_2\\
    &= \|X\|_*.\end{align*}

    For the right inequality, note that we have
    $$X = \sum_{i,j}X_{i,j}e_ie_j^T$$
    where $e_i \in \real^n$ is the $i$-th standard basis vector, while $e_j \in \real^m$ is the $j$-th standard basis vector. We then have
    $$\|X\|_* \leq \sum_{i,j}|X_{i,j}|\|e_ie_j^T\|_* = \sum_{i,j}|X_{i,j}| = \|X\|_{1,1}.$$\end{proof}

    In fact, these are the best constants possible. To see this, first consider the matrix $X$ with a 1 in the upper-left entry and 0 elsewhere. Clearly, $\|X\|_* = \|X\|_{1,1} = 1$, so the right-hand inequality is tight. For the left-hand inequality, consider the all-ones matrix $X$. This has one singular value, $\sqrt{nm}$, so $\|X\|_{*} = \sqrt{nm}$. On the other hand, $\|X\|_{1,1} = nm$. Therefore, $\|X\|_{1,1} = \sqrt{nm}\|X\|_*$ in this case.

\section{Hyperparameter optimization}
\label{sec:hyperparam}
We firstly provide results of step size tunning, as shows in Table \ref{tab:stepsize} we reported stepsize tunning results for all of our experiments. We tuned these step sizes by evaluating many logarithmically spaced step sizes (e.g., $2^{-10},\ldots,2^0$) and evaluated on validation loss.

This step sizes tuning, for 8 gradient coding methods and 3 datasets was only possible because fairly small networks were used.

\begin{table}[th]
	\centering
	\caption{Tuned stepsizes for experiments}
	\begin{tabular}{|c|c|c|c|}
		\hline Experiments
        & CIFAR-10 \& ResNet-18 & SVHN \& ResNet-18  & CIFAR-10 \& VGG-11-BN \bigstrut\\
		\hline
		\hline
		SVD rank 1 & 0.0625 & 0.1 & 0.125 \bigstrut\\
		\hline
		SVD rank 2 & 0.0625 & 0.125 & 0.125  \bigstrut\\
		\hline
		SVD rank 3 & 0.125 & 0.125 & 0.0625  \bigstrut\\
		\hline
		SVD rank 4 & 0.0625 & 0.125 & 0.15 \bigstrut\\
		\hline
		QSGD 1bit & 0.0078125 & 0.0078125 & 0.0009765625 \bigstrut\\
		\hline
		QSGD 2bit & 0.0078125 & 0.0078125 & 0.0009765625  \bigstrut\\
		\hline
		QSGD 4bit & 0.125 & 0.046875 & 0.015625  \bigstrut\\
		\hline
		QSGD 8bit & 0.125 & 0.125 & 0.0625  \bigstrut\\
		\hline
	\end{tabular}%
	\label{tab:stepsize}%
\end{table}%

\section{Additional Experiments}
\textbf{Runtime analysis:} We empirically study runtime costs of spectral-\atomsgd{} with sparsity budget set at 1, 2, 3, 6 and made comparisons among $b$-bit QSGD and TernGrad. We deployed distributed training on ResNet-18 with batch size $B=256$ on the CIFAR-10 dataset run with m5.2xlarge instances. As shown in Figure \ref{runtime-analysis}, there is a trade-off between the amount of communication per iteration and the running time for both singular value sparsification and QSGD. In some scenarios, spectral-\atomsgd{} attains a higher compression ratio than QSGD and TernGrad. For example, singular value sparsification with sparsity budget 1 may communicate smaller messages than $\{2,4\}$-bit QSGD and Terngrad.

\begin{figure}[h]
    \centering
    \includegraphics[width=0.95\linewidth]{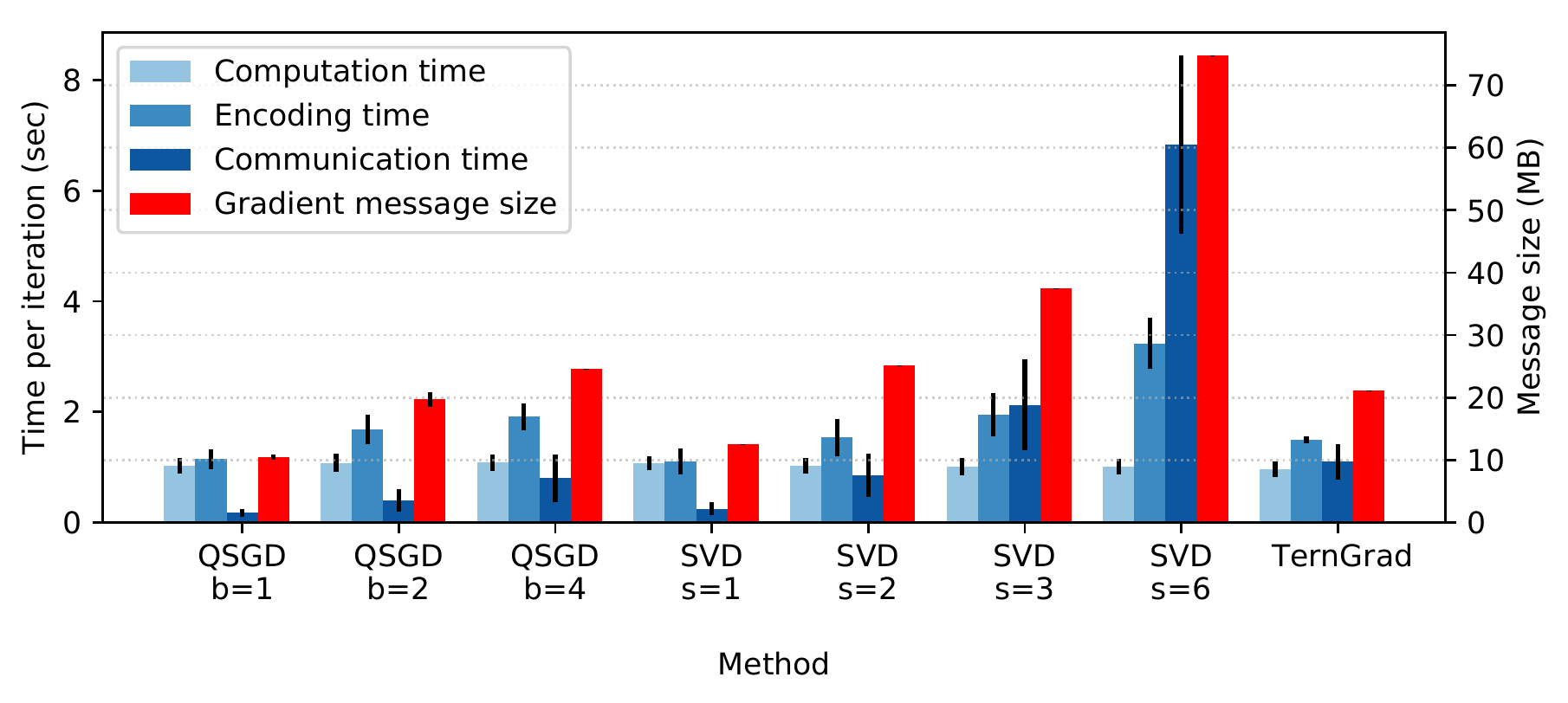}
    \caption{Runtime analysis of different sparsification methods (singular
    value sparsification, QSGD, and TernGrad) for ResNet-18 trained on CIFAR-10.
    The values shown are computation, encoding and communication time as well
    as the size of the message required to send gradients between workers.}
    \label{runtime-analysis}
\end{figure}

\begin{figure*}[h]
	\centering
	\subfigure[CIFAR-10, ResNet-18, Best of QSGD and SVD]{\includegraphics[width=0.32\linewidth]{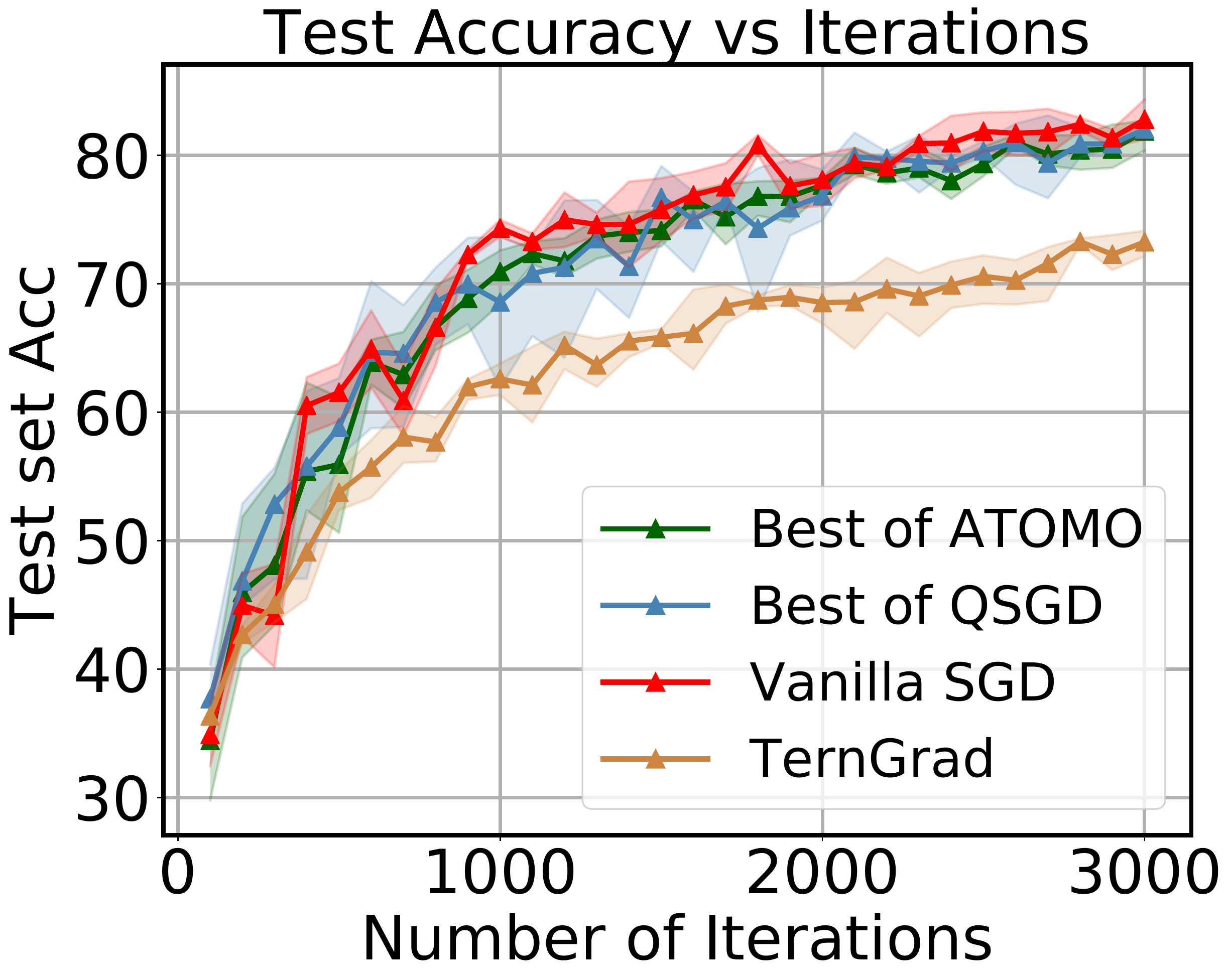}}
	\subfigure[SVHN, ResNet-18, Best of QSGD and SVD]{\includegraphics[width=0.32\linewidth]{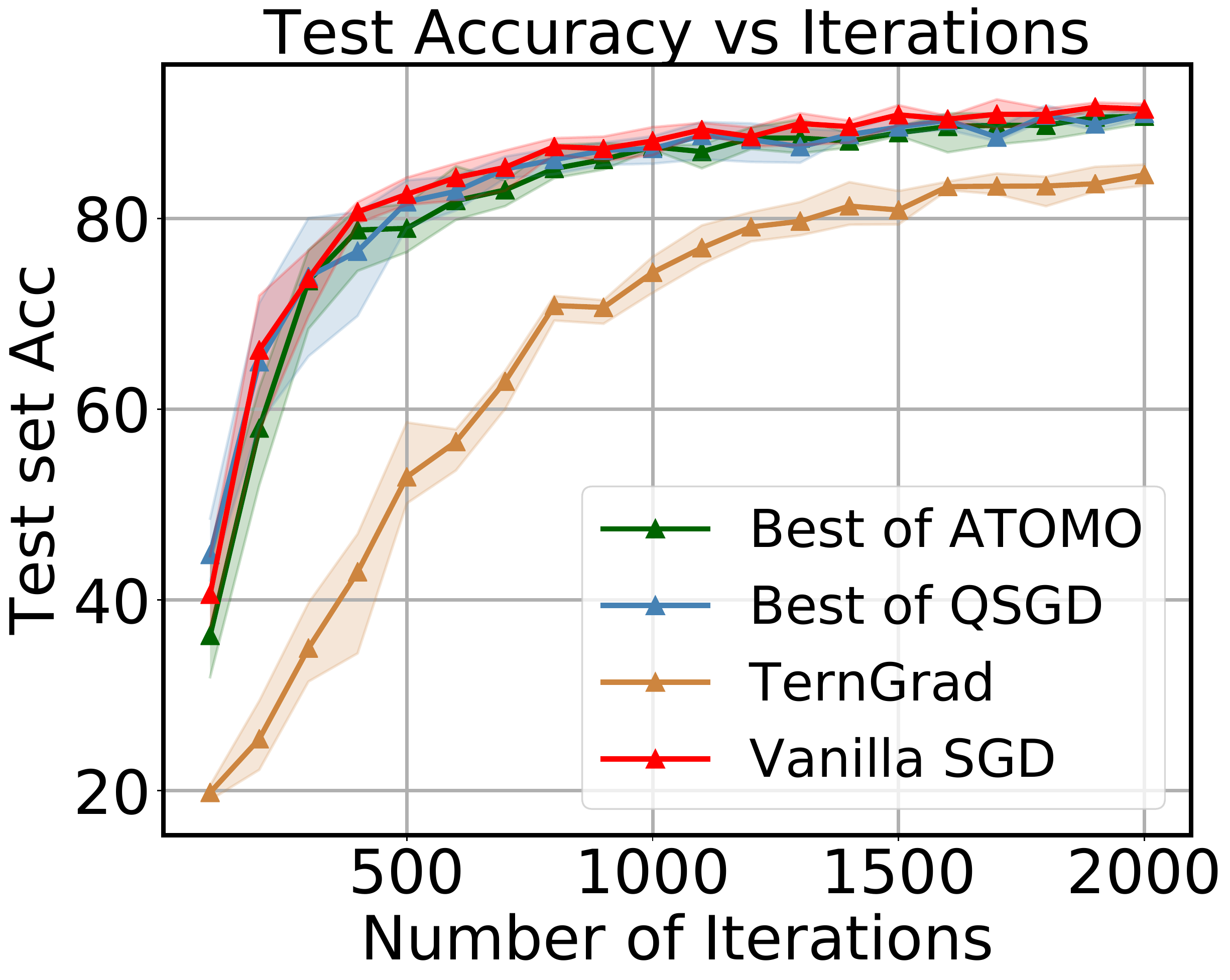}}
	\subfigure[CIFAR-10, VGG11, Best of QSGD and SVD]{\includegraphics[width=0.32\linewidth]{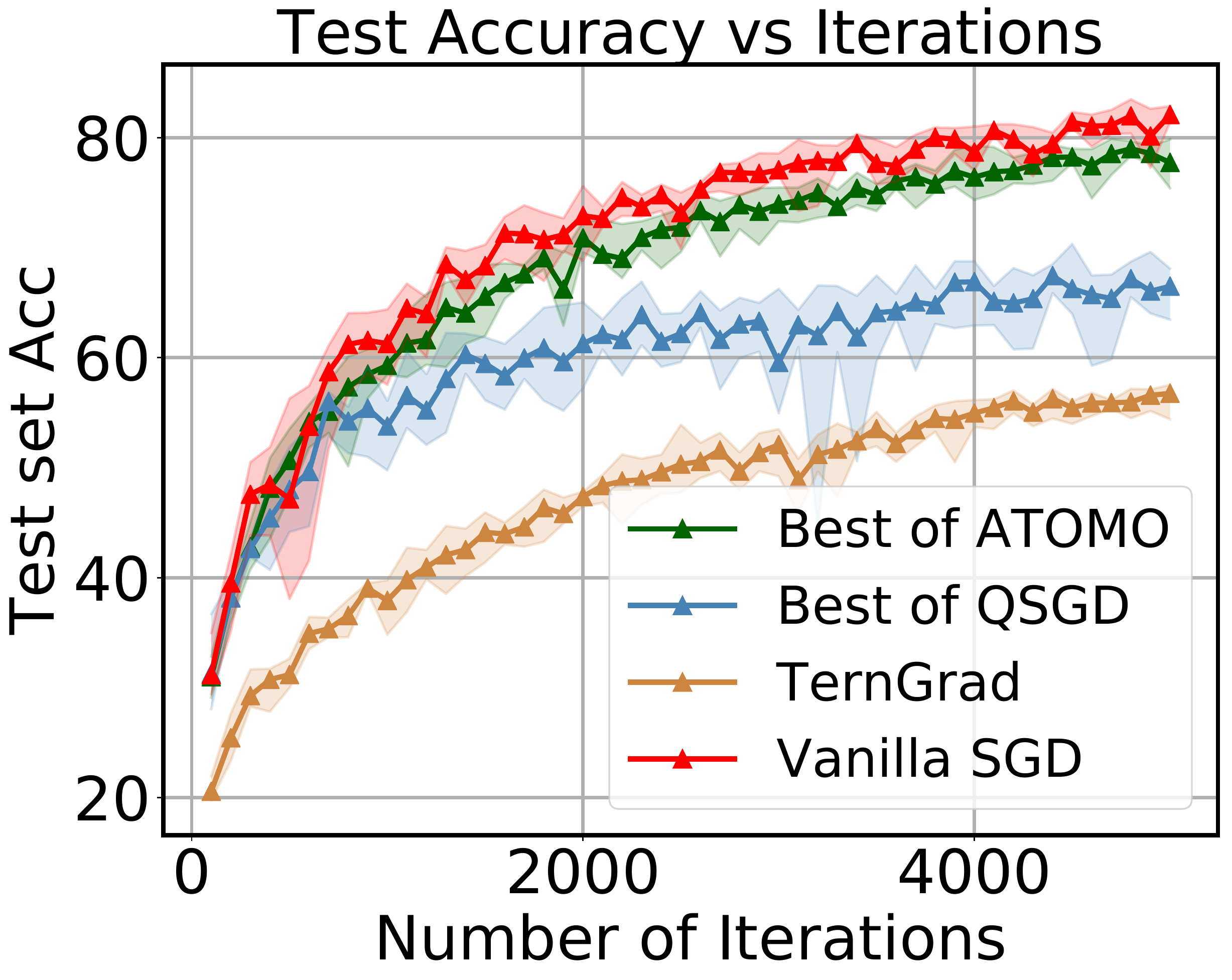}}
	\caption{Convergence rates with respect to number of iterations on: (a) CIFAR-10 on ResNet-18 of best performances from QSGD and SVD (b) SVHN on ResNet-18 of best performances from QSGD and SVD, (c) CIFAR-10 on VGG-11-BN best of performances from  QSGD and SVD}
	\label{fig-convergence-iter}
\end{figure*}

\begin{table}
	\centering
    \includegraphics[width=0.80\textwidth]{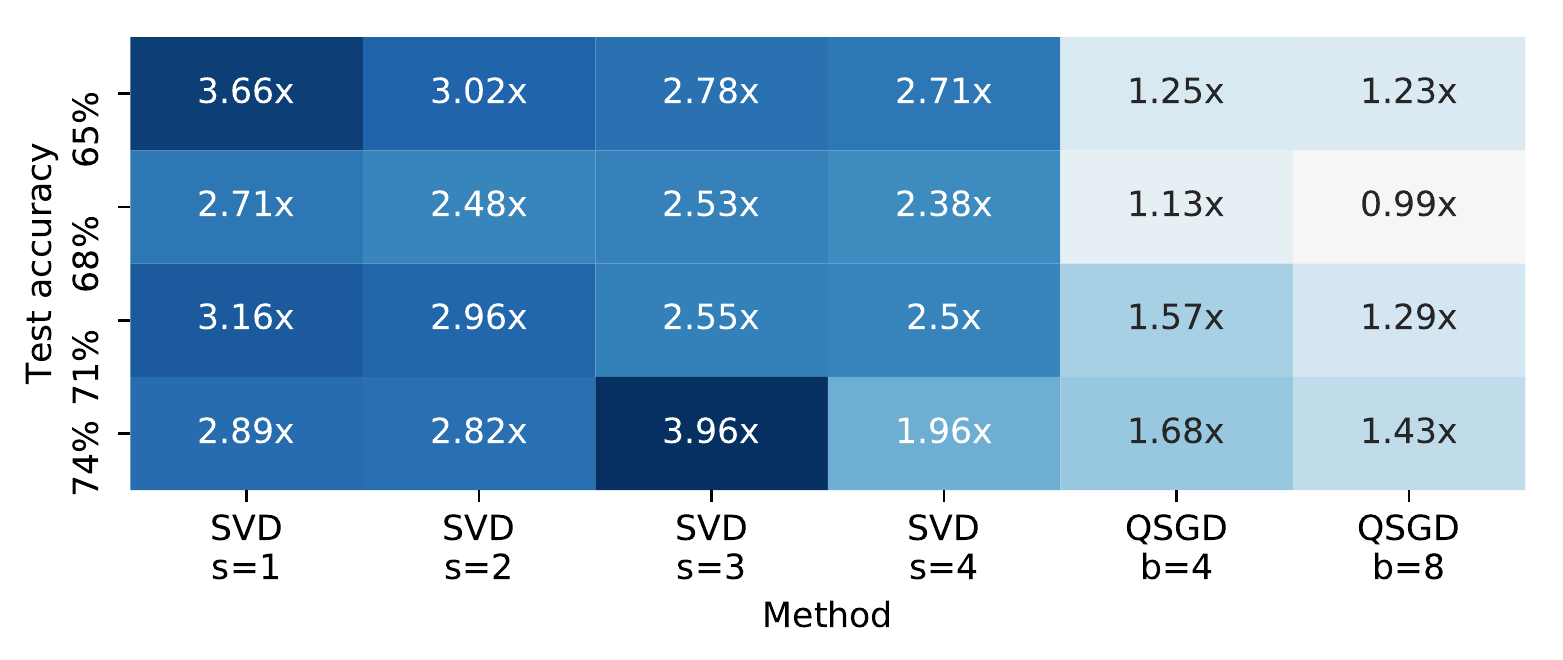}
	\caption{Speedups of spectral-\atomsgd{} with sparsity budget $s$, $b$-bit QSGD, and TernGrad using VGG11 on CIFAR-10 over vanilla SGD.}
	\label{tab:SpeedupVGG}%
\end{table}%

\end{document}